\documentclass[journal,12pt,draftclsnofoot,onecolumn]{IEEEtran}
\usepackage{mathrsfs}
\usepackage{amsmath}
\usepackage{amsthm}
\usepackage{tabularx}
\usepackage{booktabs}
\usepackage{amssymb}
\usepackage{graphicx}
\usepackage{times}
\usepackage{latexsym}
\usepackage{subfigure}
\usepackage{cite}
\usepackage{balance}
\usepackage{multirow}
\usepackage{algorithmic}
\usepackage{color}
\usepackage{float}
\usepackage{cases}
\usepackage{algorithm}
\usepackage{yhmath}
\usepackage{footnote}
\usepackage{bm}
\usepackage[justification=centering]{caption}
\captionsetup{font = footnotesize}

\newtheorem{prop}{Proposition}
\newtheorem{lemma}{Lemma}
\newtheorem{theorem}{Theorem}

\theoremstyle{remark}
\newtheorem{remark}{Remark}
\renewcommand{\algorithmicrequire}{ \textbf{Input:}}


%
\begin{document}
\title{Joint Optimization of Communications and Federated Learning Over the Air}
\author{ Xin Fan$^{1}$, Yue Wang$^2$,~\IEEEmembership{Member,~IEEE}, Yan Huo$^{1}$,~\IEEEmembership{Senior Member,~IEEE}, and Zhi Tian$^2,~\IEEEmembership{Fellow,~IEEE}$\\
$^{1}$School of Electronics and Information Engineering, Beijing Jiaotong University, Beijing, China\\
$^2$Department of Electrical \& Computer Engineering, George Mason University, Fairfax, VA, USA
\\E-mails: \{yhuo,fanxin\}@bjtu.edu.cn, \{ywang56,ztian1\}@gmu.edu}
\maketitle

\begin{abstract}
Federated learning (FL) is an attractive paradigm for making use of rich distributed data while protecting data privacy. Nonetheless, nonideal communication links and limited transmission resources may hinder the implementation of fast and accurate FL. In this paper, we study joint optimization of communications and FL based on analog aggregation transmission in realistic wireless networks. We first derive closed-form expressions for the expected convergence rate of FL over the air, which theoretically quantify the impact of analog aggregation on FL. Based on the analytical results, we develop a joint optimization model for accurate FL implementation, which allows a parameter server to select a subset of workers and determine an appropriate power scaling factor. Since the practical setting of FL over the air encounters unobservable parameters, we reformulate the joint optimization of worker selection and power allocation using controlled approximation.
 Finally, we efficiently solve the resulting mixed-integer programming problem via a simple yet optimal finite-set search method by reducing the search space. Simulation results show that the proposed solutions developed for realistic wireless analog channels outperform a benchmark method, and achieve comparable performance of the ideal case where FL is implemented over noise-free wireless channels.
\end{abstract}
\begin{IEEEkeywords}
Federated learning, analog aggregation, convergence analysis, joint optimization, worker scheduling, power scaling.
\end{IEEEkeywords}

\section{Introduction}
In recent years, with the development of IoT and social networks, huge amounts of data have been generated at the edge of networks \cite{chiang2016fog}. To obtain useful information from big data, machine learning has been  widely applied to deal with complex models and tasks in emerging data-driven applications, such as autonomous driving, virtual and augmented reality \cite{park2019wireless}. Standard machine learning is usually developed under a centralized architecture, where each node located at the edge sends its collected data to a central node for centralized data processing.
However, with the exponential growth of the volume of local data and the increasing concerns on user data privacy, it is neither practical nor safe to directly transmit the data of local devices to a central node due to the limited communication and processing capability as well as the lack of privacy protection. As such, distributed machine learning is well motivated to overcome these issues. 

In the regime of distributed machine learning, federated learning (FL) has been proposed as a well noted approach for collaborative learning\cite{li2020federated}.
In FL, local workers train local models from their own data, and then transmit their local updates to a parameter server (PS). The PS aggregates these received local updates and sends the averaged update back to the local workers.
These iterative updates between the PS and workers, can be either model parameters or their gradients, for model averaging \cite{mcmahan2016communication} and gradient averaging \cite{konevcny2016federated}, respectively.
In this way, FL relieves communication overheads and protects user privacy compared to traditional data-sharing based collaborative learning, especially when the local data is in large volume and privacy-sensitive. Existing research on FL mostly focuses on FL algorithms under idealized link assumptions. However, the impacts of wireless environments on FL performance should be taken into account in the design of FL deployed in practical wireless systems. Otherwise, such impacts may introduce unwanted training errors that dramatically degrade the learning performance in terms of accuracy and convergence rate\cite{zhu2020toward}. 

To solve this problem, research efforts have been spent on optimizing network resources used for transmitting model updates in FL\cite{chen2020joint, vu2020cell}. These works of FL over wireless networks adopt digital communications, using a transmission-then-aggregation policy. Unfortunately, the communication overhead and transmission latency become large as the number of active workers increases. On the other hand, it is worth noting that FL aims for global aggregation and hence only utilizes the averaged updates of distributed workers rather than the individual local updates from workers. Alternatively, the nature of waveform superposition in wireless multiple access channel (MAC) \cite{nazer2007computation,chen2018over,goldenbaum2013harnessing,abari2015airshare} provides a direct and efficient way for transmission of the averaged updates in FL, also known as analog aggregation based FL\cite{amiri2020machine, amiri2020federated,amiri2019collaborative,zhu2019broadband, sun2019energy,yang2020federated}. As a joint transmission-and-aggregation policy, analog aggregation transmission enables all the participating workers to simultaneously upload their local model updates to the PS over the same time-frequency resources as long as the aggregated waveform represents the averaged updates, thus substantially reducing the overhead of wireless communication for FL. 

The research on analog aggregation based FL is still at early stage, leaving some fundamental questions unexplored, such as its convergence behavior and design of efficient algorithms. Given the limited transmit power and communication bandwidth at user devices, users may have to contend for communication resources when transmitting their local updates to the PS. It gives rise to the need for an efficient transmission paradigm, along with network resource allocation in terms of worker selection and transmit power control. All these practical issues motivate our work to study FL from the perspectives of both wireless communications and machine learning. In this paper, we quantify the impact of analog aggregation on the convergence behavior and performance of FL. Such quantitative results are essential in guiding the joint optimization of communication and computing resources. This paper aims at a comprehensive study on problem formulation, solution development, and algorithm implementation for the joint design and optimization of wireless communication and FL. Our key contributions are summarized as follows:

\begin{itemize}
\item We derive closed-form expressions for the expected convergence rate of FL over the air in the cases of convex and non-convex, respectively, which not only interprets but also quantifies the impact of wireless communications on the convergence and accuracy of FL over the air. Also, full-size gradient descent (GD) and mini-batched statistical gradient descent (SGD) methods are both considered in this work. These closed-form expressions unveil a fundamental connection between analog wireless communication and FL with analog aggregation, which provides a fresh perspective to measure how the parameter design of analog wireless systems affects the performance of FL over the air.
 \item Based on the closed-form theoretical results, we formulate a joint optimization problem of learning, worker selection, and power control, with a goal of minimizing the global FL loss function given limited transmit power and bandwidth. The optimization formulation turns out to be universal for the convex and non-convex cases with GD and SGD. Further, for practical implementation of the joint optimization problem in the presence of some unobservable parameters, we develop an alternative reformulation that approximates the original unattainable problem as a feasible optimization problem under the operational constraints of analog aggregation.  
    \item To efficiently solve the approximate problem,  we identity a tight solution space by exploring the relationship between the number of workers and the power scaling. Thanks to the reduced search space, we propose a simple discrete enumeration method to efficiently find the globally optimal solution. 
\end{itemize}

We evaluate the proposed joint optimization scheme for FL with analog aggregation in solving linear regression and image classification problems, respectively. Simulation results show that our proposed FL is superior to the benchmark scheme that uses random worker selection and power control, and achieves comparable performance to the ideal case where FL is implemented over noise-free wireless channels.

The remainder of this paper is organized as follows. Related work is presented in Section \ref{Sec:Related}. The system model for FL over the air and the associated joint communication and learning optimization formulation are presented in Section \ref{sec:Model}. Section \ref{sec:Convergence Analysis} derives the closed-form expressions of the expected convergence rate of the FL over the air as the foundation for algorithm design and performance analysis. Section \ref{sec:Joint optimization} provides a framework of joint optimization of communication and FL, and develops the corresponding algorithms. Section \ref{Sec:Numerical Results} presents numerical results, followed by conclusions in Section \ref{Sec:Conclusion}.


\section{Related Work}\label{Sec:Related}
This section reviews the literature and highlights the novelty of this paper with respect to related works.

To achieve communication efficiency in distributed learning, most of the existing strategies focus on digital communications, which may involve the preprocessing of transmitted updates or the management of wireless resources. For example, a popular line of work is to reduce the communication load per worker by compression of the updates under the assumptions of ideal communication links, such as exploiting coding schemes \cite{ye2018communication}, utilizing the sparsity of updates \cite{aji2017sparse}, employing quantization of the updates \cite{liu2019decentralized}, and avoiding less informative local updates via communication censoring schemes \cite{liu2019communication,8755802,chen2018lag,8646657,xu2020coke}. Another line of work is to support FL through communication resource management, such as worker scheduling schemes to maximize the number of participating workers\cite{zeng2019energy}, joint optimization of resource allocation and worker scheduling\cite{chen2020joint}, and communication and computation resource allocation and scheduling for cell-free networks\cite{vu2020cell}.

There are some pioneering works on analog aggregation based FL \cite{amiri2020machine, amiri2020federated, zhu2019broadband, amiri2019collaborative,sun2019energy,yang2020federated}, most of which focus on designing transmission schemes\cite{amiri2020machine, amiri2020federated, zhu2019broadband, amiri2019collaborative}. They adopt preselected  participating workers and fix their power allocation without further optimization along FL iterations. The optimization issues are considered in \cite{sun2019energy, yang2020federated}, but they are mainly conducted on the communication side alone, without an underlying connection to FL. When communication-based metrics are used, the optimization in existing works often suggests to maximize the number of selected workers that participate FL, but our theoretical results indicate that selecting more workers is not necessarily  better over imperfect links or under limited communication resources.  Thus, unlike these existing works, we seek to analyze the convergence behavior of analog aggregation based FL, which provides a fresh angle to interpret the specific relationship between communications and FL in the paradigm of analog aggregation. Such a connection leads to this work on a joint optimization framework for analog communications and FL, in which the work selection and power allocation decisions are optimized during the iterative FL process.

\section{System Model}\label{sec:Model}
\begin{figure}[tb]
  \centering
  \includegraphics[scale=0.6]{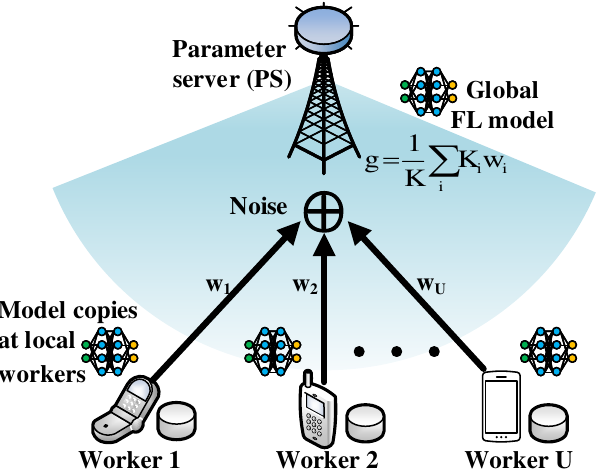}
    \caption{Federated learning via analog aggregation from wirelessly distributed data.}\label{model}
\end{figure}
As shown in Fig. \ref{model}, we consider a one-hop wireless network consisting of a single parameter server (PS) at a base station and $U$ user devices as distributed local workers. Through federated learning, the PS and all workers collaborate to train a common model for supervised learning and data inference, without sharing local data.

\subsection{FL Model}
Let $\mathcal{D}_i=\{\mathbf{x}_{i,k},\mathbf{y}_{i,k}\}_{k=1}^{K_i}$ denote the local dataset at the $i$-th worker, $i=1, \ldots, U$, where $\mathbf{x}_{i,k}$ is the input data vector, $\mathbf{y}_{i,k}$ is the labeled output vector, $k=1,2,...,K_i$, and $K_i = |\mathcal{D}_i|$ is the number of data samples available at the $i$-th worker. With $K=\sum_{i=1}^U K_i$ samples in total, these $U$ workers seek to collectively train a learning model parameterized by a global model parameter $\mathbf{w} = [w^1, \ldots, w^D] \in \mathcal{R}^D$ of dimension $D$, by minimizing the following loss function
\begin{equation}\label{eq:globallossf}
 \text{(Global loss function)}\quad F(\mathbf{w}; \mathcal{D})=\frac{1}{K}\sum_{i=1}^U\sum_{k=1}^{K_i} f(\mathbf{w};\mathbf{x}_{i,k},\mathbf{y}_{i,k}),
\end{equation}
where the global loss function $F(\mathbf{w}; \mathcal{D})$ is a summation of $K$ data-dependent components, each component $f(\mathbf{w};\mathbf{x}_{i,k},\mathbf{y}_{i,k})$ is a sample-wise local function that quantifies the model prediction error of the same data model parameterized by the shared model parameter $\mathbf{w}$, and $\mathcal{D} = \bigcup_i \mathcal{D}_i $.

In distributed learning, each worker trains a local model $\mathbf{w}_i$ from its local data $\mathcal{D}_i$, which can be viewed as a local copy of the global model $\mathbf{w}$. That is, the local loss function is
\begin{equation}\label{eq:locallossf}
 \text{(Local loss function)}\quad F_i(\mathbf{w}_i;\mathcal{D}_i)=\frac{1}{K_i}\sum_{k=1}^{K_i} f(\mathbf{w}_i;\mathbf{x}_{i,k},\mathbf{y}_{i,k}),
\end{equation}
where $\mathbf{w}_i = [w_i^1, \ldots, w_i^D] \in \mathcal{R}^D$ is the local model parameter. Through collaboration, it is desired to achieve $\mathbf{w}_i = \mathbf{w}=\mathbf{w}^*$, $\forall i$, so that all workers reach the globally optimal model $\mathbf{w}^*$. Such a distributed learning can be formulated via consensus optimization as\cite{mcmahan2016communication,wang2018cooperative}
\begin{subequations}\label{eq:optiP1}
\begin{align}\label{eq:lossfopt}
 \textbf{P1:}\quad \quad \min_{\mathbf{w}}& \quad \frac{1}{K}\sum_{i=1}^U\sum_{k=1}^{K_i} f(\mathbf{w}_i;\mathbf{x}_{i,k},y_{i,k}).
\end{align}
\end{subequations}

To solve \textbf{P1}, this paper adopts a model-averaging algorithm for FL \cite{mcmahan2016communication,wang2018cooperative}. It is essentially an iterative process consisting of both computing and communication steps at each iteration. Specifically, in each communication round, the PS broadcasts the current $\mathbf{w}$ to all workers. Then, the $i$-th worker uses a learning algorithm to update its $\mathbf{w}_i$ by minimizing its local data-dependent loss function in \eqref{eq:locallossf}. In this work, gradient descent\footnote{In this work, we take the basic gradient descent as an example, while the proposed methodology can be extended to mini-batch gradient descent as well.} is applied, in which the local model at the $i$-th local worker is updated as
\begin{align}\label{eq:localupdate0}
  \text{(Local model updating)}\quad \mathbf{w}_{i}&=\mathbf{w}-\alpha\nabla F_i(\mathbf{w}_i;\mathcal{D}_i)\nonumber\\
  &=\mathbf{w}-\frac{\alpha}{K_i}\sum_{k=1}^{K_i}\nabla f(\mathbf{w};\mathbf{x}_{i,k},\mathbf{y}_{i,k}),
\end{align}
where $\alpha$ is the learning rate, and $\nabla f(\mathbf{w};\mathbf{x}_{i,k},\mathbf{y}_{i,k})$ is the gradient of $f(\mathbf{w};\mathbf{x}_{i,k},\mathbf{y}_{i,k})$ with respect to $\mathbf{w}$.

When local updating is completed, each worker transmits its updated parameter $\mathbf{w}_i$ to the PS via wireless uplinks to update the global $\mathbf{w}$ as
\begin{equation}\label{eq:g}
 \text{(Global model updating)}\quad \mathbf{w}=\frac{\sum_{i=1}^U K_i\mathbf{w}_{i}}{K}.\qquad\qquad\ \
\end{equation}
Then, the PS broadcasts $\mathbf{w}$ in \eqref{eq:g} to all participating workers as their initial value in the next round. The FL implements the local model-updating in \eqref{eq:localupdate0} and the global model-averaging in \eqref{eq:g} iteratively, until convergence. It has been shown that this FL algorithm converges to the globally optimal solution of the original problem in \textbf{P1} under the conditions that $F$ is a convex function and the data transmission between the PS and workers is error-free\cite{mcmahan2016communication,wang2018cooperative}.

Note that the implementation steps in \eqref{eq:localupdate0} and \eqref{eq:g} only concern the computational aspect of FL, by assuming perfect communications for both the global $\mathbf{w}$ and local $\mathbf{w}_i$ between the PS and all workers. However, the communication impacts on FL performance should not be ignored. Especially in practical wireless network environments, certain errors are inevitably introduced during transmissions of the updates due to the imperfect characteristics of wireless channels.

\subsection{Analog Aggregation Transmission Model}
To avoid heavy communication overhead and save transmission bandwidth of FL over wireless
channels, we adopt analog aggregation without coding, which allows multiple workers to simultaneously upload their local model updates to the PS over the same time-frequency resources.
All workers transmit their local $\mathbf{w}_i$'s in an analog form with perfect time synchronization among them\footnote{The implementation of time synchronization and the impact of imperfect synchronization are beyond the scope of this work. Interested readers are referred to \cite{abari2015airshare, goldenbaum2013robust}.}. In this way, the local updates $\mathbf{w}_i$'s are aggregated over the air to implement the global model updating step in \eqref{eq:g}. Such an analog aggregation is conducted in an entry-wise manner. That is, the $d$-th entries $w_i^d$ from all workers, $i=1, ..., U$, are aggregated to compute $w^d$ in \eqref{eq:g}, for any $d\in [1, D]$.

Let $\mathbf{p}_{i,t}=[p^1_{i,t},\dots,p^d_{i,t},\ldots,p^D_{i,t}]$ denote the power control vector of worker $i$ at the $t$-th iteration. Noticeably, the choice of $\mathbf{p}_{i,t}$ in FL over the air should be made not only to effectively implement the aggregation rule in \eqref{eq:g}, but also to properly accommodate the need for network resource allocation. Accordingly, we set the power control policy as
\begin{equation}\label{power_control}
 p^d_{i,t}=\frac{\beta^d_{i,t}K_ib^d_{t}}{h^d_{i,t}},
\end{equation}
where $h_{i,t}$ is the channel gain between the $i$-th worker and the PS at the $t$-th iteration\footnote{In this paper, we assume the channel state information (CSI) to be constant within each iteration, but may vary over iterations. We also assume that the CSI is perfectly known at the PS, and leave the imperfect CSI case in future work.}, $b^d_{t}$ is the power scaling factor, and $\beta^d_{i,t}$ is a transmission scheduling indicator. That is, $\beta^d_{i,t}=1$ means that the $d$-th entry of the local model parameter $\mathbf{w}_{i,t}$ of the $i$-th worker is scheduled to contribute to the FL algorithm at the $t$-th iteration, and $\beta^d_{i,t}=0$, otherwise. Through power scaling, the transmit power used for uploading the $d$-th entry from the $i$-th worker should  not exceed a maximum power limit $P_i^{d,\max} = P_i^{\max}$ for any $d$, as follows: 

\begin{equation}\label{power_limitation}
 |p^d_{i,t}w^d_{i,t} |^2=\left|\frac{\beta^d_{i,t}K_ib^d_{t}}{h^d_{i,t}}w^d_{i,t}\right|^2\leq P_i^{\max}.
\end{equation}

At the PS side, the received signal at the $t$-th iteration can be written as
\begin{align}\label{4}
  \mathbf{y}_{t}&=\sum_{i=1}^U \mathbf{p}_{i,t}\odot\mathbf{w}_{i,t}\odot\mathbf{h}_{i,t}+\mathbf{z}_{t}  \nonumber
\\ &=\sum_{i=1}^U K_i\mathbf{b}_{t}\odot\bm{\beta}_{i,t}\odot\mathbf{w}_{i,t}+\mathbf{z}_{t},
 \end{align}
where $\odot$ represents Hadamard product, $\mathbf{h}_{i,t}=[h^1_{i,t},h^2_{i,t},...,h^D_{i,t}]$, $\bm{\beta}_{i,t}=[\beta^1_{i,t},\beta^2_{i,t},..,\beta^D_{i,t}]$, $\mathbf{b}_{t}=[b^1_{t}, b^2_{t}, ..., b^D_{t}]$, and $\mathbf{z}_{t} \sim \mathcal{CN}(0,\sigma^2\mathbf{I})$ is additive white Gaussian noise (AWGN).

Given the received $\mathbf{y}_{t}$, the PS estimates $\mathbf{w}_{t}$ via a post-processing operation as
\begin{align}\label{eq:gt}
  \mathbf{w}_{t}=  &\left(\sum_{i=1}^U K_i\bm{\beta}_{i,t}\odot\mathbf{b}_{t}\right)^{\odot-1}\odot\mathbf{y}_{t}\nonumber\\
  =&\left(\sum_{i=1}^U K_i\bm{\beta}_{i,t}\right)^{\odot-1}\sum_{i=1}^U K_i \bm{\beta}_{i,t} \odot\mathbf{w}_{i,t}  +\left(\sum_{i=1}^U K_i\bm{\beta}_{i,t}\odot\bm{b}_{t}\right)^{\odot-1}\odot\mathbf{z}_{t},
\end{align}
where $(\sum_{i=1}^U K_i\bm{\beta}_{i,t}\odot\mathbf{b}_{t})^{\odot-1}$ is a properly chosen scaling vector to produce equal weighting for participating $\mathbf{w}_i$'s in \eqref{eq:gt} as desired in \eqref{eq:g}, and $(\mathbf{X})^{\odot-1}$ represents the inverse Hadamard operation of $\mathbf{X}$ that calculates its entry-wise reciprocal. Noticeably, in order to implement the averaging of \eqref{eq:g} in FL over the air, such a post-processing operation requires $\mathbf{b}_{t}$ to be the same for all workers for given $t$ and $d$, which allows to eliminate $\mathbf{b}_{t}$ from the first term in \eqref{eq:gt}.

Comparing \eqref{eq:gt} with \eqref{eq:g}, there exist differences between $\mathbf{w}_{t}$ and $\mathbf{w}$ due to the effect of wireless communications. This work aims to mitigate such a gap through optimizing the worker selection $\bm{\beta}_{i,t}$ and power scaling $\mathbf{b}_{t}$ for FL over the air. To this end, our next step is to unveil an important but unexplored foundation, i.e., how wireless communications affect the convergence behavior of FL over the air.

\section{The Convergence Analysis of FL with Analog Aggregation}\label{sec:Convergence Analysis}

In this section, we study the effect of analog aggregation transmission on FL over the air, by analyzing its convergence behavior for both the convex and the non-convex cases. To average the effects of instantaneous SNRs, we derive the expected convergence rate of FL over the air, which quantifies the impact of wireless communications on FL using analog aggregation transmissions.

\subsection{Convex Case}
We first make the following assumptions that are commonly adopted in the optimization literature \cite{shamir2014communication,magnusson2020maintaining,Bertsekas1996Neuro,friedlander2012hybrid,chen2020joint,alistarh2018convergence}.

\textbf{Assumption 1 (Lipschitz continuity, smoothness):} The gradient $\nabla F(\mathbf{w})$ of the loss function $F(\mathbf{w})$ is uniformly Lipschitz continuous with respect to $\mathbf{w}$, that is,
\begin{eqnarray}\label{eq:Lipschitz}
\|\nabla F(\mathbf{w}_{t+1})-\nabla F(\mathbf{w}_{t})\|\leq L\|\mathbf{w}_{t+1}-\mathbf{w}_{t}\|, \quad \forall \mathbf{w}_{t}, \mathbf{w}_{t+1},
\end{eqnarray}
where $L$ is a positive constant, referred to as a Lipschitz constant for the function $F(\cdot)$.

\textbf{Assumption 2 (strongly convex):} $\nabla F(\mathbf{w})$ is strongly convex with a positive parameter $\mu$, obeying 
\begin{align}\label{eq:stronglyconvex}
F(\mathbf{w}_{t+1})\geq F(\mathbf{w}_{t})+(\mathbf{w}_{t+1}-\mathbf{w}_{t})^T\nabla F(\mathbf{w}_{t})+\frac{\mu}{2}\|\mathbf{w}_{t+1}-\mathbf{w}_{t}\|^2, \quad \forall \mathbf{w}_{t}, \mathbf{w}_{t+1}.
\end{align}


\textbf{Assumption 3 (bounded local gradients):} The sample-wised local gradients at local workers are bounded by their global counterpart\cite{Bertsekas1996Neuro,friedlander2012hybrid}
\begin{eqnarray}\label{eq:bound}
\| \nabla f(\mathbf{w}_{t})\|^2 \leq \rho_1+\rho_2\| \nabla F(\mathbf{w}_{t})\|^2,
\end{eqnarray}
where $\rho_1, \rho_2\geq0$.

According to \cite{konevcny2016federated, yuan2016convergence}, the FL algorithm applied over ideal wireless channels is able to solve \textbf{P1} and converges to an optimal $\mathbf{w}^*$. In the presence of wireless transmission errors, we derive the expected convergence rate of the FL over the air with analog aggregation, as in \textbf{Theorem \ref{Theorem1}}.

\begin{theorem}\label{Theorem1}
Adopt \textbf{Assumptions 1-3}, and denote the globally optimal learning model in \eqref{eq:optiP1} as $\mathbf{w}^*$. The model updating rule for $\mathbf{w}_{t}$ of the FL-over-the-air scheme is given by \eqref{eq:gt}, $\forall t$. Given the transmit power scaling factors $\bm{b}_{t}$, worker selection vectors $\bm{\beta}_{i,t}$, and setting the learning rate to be $\alpha = \frac{1}{L}$, the expected performance gap $\mathbb{E}[F(\mathbf{w}_{t})-F(\mathbf{w}^*)]$ of $\mathbf{w}_{t}$ at the $t$-th iteration is given by
\begin{align}\label{eq:Theorem1}
\mathbb{E}[F(\mathbf{w}_{t})-F(\mathbf{w}^*)]
\leq B_{t}+A_{t}\mathbb{E}[F(\mathbf{w}_{t-1})-F(\mathbf{w}^*)],
\end{align}
with
\begin{align}\label{eq:A}
A_{t}=1-\frac{\mu}{L}+\rho_2\sum_{d=1}^D\Bigg(\frac{K}{\sum_{i=1}^U K_i\beta^d_{i,t}}-1\Bigg),
\end{align}
\begin{align}\label{eq:B}
B_{t}&=\frac{\rho_1}{2L}\sum_{d=1}^D\Bigg(\frac{K}{\sum_{i=1}^U K_i\beta^d_{i,t}}-1\Bigg)+\Bigg\|\left(\sum_{i=1}^U K_i\bm{\beta}_{i,t}\odot\bm{b}_{t}\right)^{\odot-1}\Bigg\|^2\frac{L\sigma^2}{2},
\end{align}
where the expectation is over the channel AWGN of zero mean and variance $\sigma^2$.
\end{theorem}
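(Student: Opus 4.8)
The plan is to treat the analog-aggregation update \eqref{eq:gt} as a perturbed gradient-descent step and feed it into the standard smoothness descent inequality, then absorb the two perturbations (partial worker participation and channel noise) into $A_t$ and $B_t$ respectively. Writing the local update \eqref{eq:localupdate0} as $\mathbf{w}_{i,t}=\mathbf{w}_{t-1}-\alpha\,\nabla F_i(\mathbf{w}_{t-1})$ and substituting into \eqref{eq:gt}, I would express $\mathbf{w}_t-\mathbf{w}_{t-1}=-\alpha\hat{\mathbf{g}}_t+\mathbf{n}_t$, where $\hat{\mathbf{g}}_t$ is the entry-wise aggregated (effective) gradient over the scheduled workers and $\mathbf{n}_t=(\sum_{i}K_i\bm{\beta}_{i,t}\odot\mathbf{b}_t)^{\odot-1}\odot\mathbf{z}_t$ is the post-processed noise. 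The central decomposition is $\hat{\mathbf{g}}_t=\nabla F(\mathbf{w}_{t-1})+\mathbf{e}_t$, where $\mathbf{e}_t$ is the gradient-aggregation error caused by the non-scheduled entries, taking entry-wise the deficit form $-\tfrac{1}{K}\sum_i K_i(1-\beta^d_{i,t})\,[\nabla F_i(\mathbf{w}_{t-1})]^d$.

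First I would invoke Assumption 3 (equivalently Assumption 1), whose Hessian bound yields the descent inequality $F(\mathbf{w}_t)\le F(\mathbf{w}_{t-1})+\nabla F(\mathbf{w}_{t-1})^{T}(\mathbf{w}_t-\mathbf{w}_{t-1})+\tfrac{L}{2}\|\mathbf{w}_t-\mathbf{w}_{t-1}\|^2$. Substituting the decomposition and taking expectation over the zero-mean noise $\mathbf{z}_t$ (independent of $\mathbf{w}_{t-1}$, so every cross term linear in $\mathbf{n}_t$ vanishes), then setting $\alpha=1/L$, the gradient-norm and cross terms collapse and I obtain
\begin{align}
\mathbb{E}[F(\mathbf{w}_t)]\le F(\mathbf{w}_{t-1})-\tfrac{1}{2L}\|\nabla F(\mathbf{w}_{t-1})\|^2+\tfrac{1}{2L}\|\mathbf{e}_t\|^2+\tfrac{L}{2}\,\mathbb{E}\|\mathbf{n}_t\|^2.
\end{align}
Because $\mathbf{z}_t\sim\mathcal{CN}(0,\sigma^2\mathbf{I})$ is white, $\mathbb{E}\|\mathbf{n}_t\|^2=\sigma^2\|(\sum_i K_i\bm{\beta}_{i,t}\odot\mathbf{b}_t)^{\odot-1}\|^2$, which already reproduces the first term of $B_t$ in \eqref{eq:B}.

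The main obstacle is bounding the participation error $\|\mathbf{e}_t\|^2$ and casting it into the exact $\|\mathbf{1}-\bm{\beta}_{i,t}\|^2$ form. Applying Cauchy--Schwarz to the deficit sum (using $\sum_i K_i=K$ and $(1-\beta^d_{i,t})^2=1-\beta^d_{i,t}$ since $\beta^d_{i,t}\in\{0,1\}$) and then summing over $d$, I would get $\|\mathbf{e}_t\|^2\le\tfrac{1}{K}\sum_i K_i\|\mathbf{1}-\bm{\beta}_{i,t}\|^2\,\|\nabla F_i(\mathbf{w}_{t-1})\|^2$. Invoking the bounded-gradient Assumption 4, \eqref{eq:bound}, which transfers from the sample-wise $\nabla f$ to the local $\nabla F_i$ by Jensen, replaces $\|\nabla F_i\|^2$ with $\rho_1+\rho_2\|\nabla F(\mathbf{w}_{t-1})\|^2$, splitting $\tfrac{1}{2L}\|\mathbf{e}_t\|^2$ into a constant $\rho_1$-piece, exactly the second term of $B_t$, and a $\rho_2$-piece proportional to $\|\nabla F(\mathbf{w}_{t-1})\|^2$. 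Reconciling the entry-wise renormalization of \eqref{eq:gt} with this clean deficit form is the step that needs the most care, as is the transfer of Assumption 4 through the aggregation normalization.

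Finally I would merge the negative $-\tfrac{1}{2L}\|\nabla F\|^2$ with the $\rho_2$-piece into the single term $-\tfrac{1}{2L}\bigl(1-\tfrac{\rho_2}{K}\sum_i K_i\|\mathbf{1}-\bm{\beta}_{i,t}\|^2\bigr)\|\nabla F(\mathbf{w}_{t-1})\|^2$, and only then apply the gradient-dominance (Polyak--\L{}ojasiewicz) inequality $\|\nabla F\|^2\ge 2\mu\,(F-F(\mathbf{w}^*))$, itself a consequence of strong convexity \eqref{eq:stronglyconvex}, so that the negative coefficient flips the inequality in the correct direction. Subtracting $F(\mathbf{w}^*)$ throughout then gives $\mathbb{E}[F(\mathbf{w}_t)-F(\mathbf{w}^*)]\le A_t\,\mathbb{E}[F(\mathbf{w}_{t-1})-F(\mathbf{w}^*)]+B_t$ with $A_t=\tfrac{L-\mu}{L}+\tfrac{\mu\rho_2}{LK}\sum_i K_i\|\mathbf{1}-\bm{\beta}_{i,t}\|^2$ and $B_t$ as in \eqref{eq:B}, matching \eqref{eq:Theorem1}. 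I expect the delicate point to be the ordering of this last step: the PL bound must be applied only after the two gradient-norm contributions are combined, since applying it to the error term alone would push the inequality the wrong way.
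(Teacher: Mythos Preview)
Your proposal is correct and follows essentially the same route as the paper: the paper likewise writes the update as a perturbed gradient step $\mathbf{w}_t=\mathbf{w}_{t-1}-\alpha(\nabla F(\mathbf{w}_{t-1})-\mathbf{o})$, applies the $L$-smoothness descent inequality with $\alpha=1/L$ to get $\mathbb{E}[F(\mathbf{w}_t)]\le F(\mathbf{w}_{t-1})-\tfrac{1}{2L}\|\nabla F\|^2+\tfrac{1}{2L}\mathbb{E}\|\mathbf{o}\|^2$, bounds the perturbation via Cauchy--Schwarz/submultiplicativity together with Assumption~4, and closes with the PL inequality $\|\nabla F\|^2\ge 2\mu(F-F(\mathbf{w}^*))$ derived from strong convexity. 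The only cosmetic difference is that the paper keeps the channel noise inside the single error $\mathbf{o}$ and splits it off later, whereas you separate $\mathbf{n}_t$ immediately via its zero mean; the step you single out as delicate---passing from the true renormalization by $\sum_i K_i\beta^d_{i,t}$ to the clean $1/K$ deficit form---is exactly the inequality the paper asserts when bounding $\mathbb{E}\|\mathbf{o}\|^2$.
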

\begin{proof}
The proof of \textbf{Theorem \ref{Theorem1}} is provide in Appendix \ref{Appendix A}.
\end{proof}

Based on \textbf{Theorem \ref{Theorem1}}, we further derive the cumulative performance gap resulted from wireless communications and the worker selection of the whole FL process, summarized by the following \textbf{Lemma \ref{Lemma1}}.
\begin{lemma}\label{Lemma1}
Given an initial global model $\mathbf{w}_{0}$, the cumulative performance gap $\mathbb{E}[F(\mathbf{w}_{t})-F(\mathbf{w}^*)]$ of FL after $t$ iterations is bounded by
\begin{align}\label{eq:Lemma1}
\mathbb{E}[F(\mathbf{w}_{t})-F(\mathbf{w}^*)]
\leq& \underbrace{\sum_{i=1}^{t-1}\prod_{j=1}^i A_{t+1-j}B_{t-i}+B_{t}}_{\Delta_{t}}
+\prod_{j=1}^{t}A_{j}\mathbb{E}[F(\mathbf{w}_{0})-F(\mathbf{w}^*)].
\end{align}
\end{lemma}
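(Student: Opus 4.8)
The plan is to read the bound in \textbf{Theorem~\ref{Theorem1}} as a one-step linear recurrence and unroll it all the way back to the initial iterate. Writing $g_t := \mathbb{E}[F(\mathbf{w}_{t})-F(\mathbf{w}^*)]$ for brevity, the theorem asserts $g_t \le B_t + A_t\, g_{t-1}$ for every $t$. The first thing I would establish is the elementary but essential fact that $A_t \ge 0$: by \textbf{Assumption 3} we have $\mu \le L$, so $(L-\mu)/L \ge 0$, while $\mu$, $K_i$, $\rho_2$ and $\|\mathbf{1}-\bm{\beta}_{i,t}\|^2$ are all nonnegative and $L,K>0$; hence every summand in \eqref{eq:A} is nonnegative. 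This nonnegativity is precisely what permits multiplying an inequality by $A_t$ without reversing it, which is the crux of the whole argument.

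With $A_t\ge 0$ secured, I would substitute the recurrence into itself repeatedly. Starting from $g_t \le B_t + A_t\, g_{t-1}$, I replace $g_{t-1}$ by $B_{t-1}+A_{t-1}g_{t-2}$ (scaling by $A_t\ge 0$), then replace $g_{t-2}$, and so on down to $g_0$. After $i$ substitutions the accumulated coefficient of $B_{t-i}$ is the product $A_t A_{t-1}\cdots A_{t-i+1}=\prod_{j=1}^{i} A_{t+1-j}$, and after the final substitution the coefficient of $g_0$ is $A_t A_{t-1}\cdots A_1=\prod_{j=1}^{t} A_j$. Collecting the $B$-terms for $i=0,1,\ldots,t-1$, where $B_t$ is the $i=0$ term carrying an empty product equal to $1$, yields exactly $\Delta_t+\prod_{j=1}^{t}A_j\, g_0$, which is \eqref{eq:Lemma1}.

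To turn this telescoping into a rigorous argument I would phrase it as an induction on $t$. The base case $t=1$ reduces to the hypothesis $g_1\le B_1+A_1 g_0$ once the empty sum $\sum_{i=1}^{0}(\cdot)$ is read as $0$ and the empty product $\prod_{j=1}^{0}(\cdot)$ as $1$. For the inductive step I apply $g_t\le B_t+A_t g_{t-1}$ and insert the induction hypothesis for $g_{t-1}$, namely
\[
g_{t-1}\le \sum_{i=1}^{t-2}\prod_{j=1}^{i} A_{t-j}\,B_{t-1-i}+B_{t-1}+\prod_{j=1}^{t-1}A_{j}\,g_0 .
\]
The only real bookkeeping is the index shift: multiplying the product terms by $A_t$ gives $A_t\prod_{j=1}^{i}A_{t-j}=\prod_{j=1}^{i+1}A_{t+1-j}$, so the summation index re-aligns to the range $1,\ldots,t-1$ and the $g_0$ coefficient becomes $\prod_{j=1}^{t}A_j$, reproducing $\Delta_t$ exactly. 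I expect the main (and only) subtlety to be this sign-and-index accounting, i.e. keeping the inequality direction valid via $A_t\ge 0$ and handling the empty-product/empty-sum conventions at the $t=1$ boundary; the remaining algebra is mechanical.
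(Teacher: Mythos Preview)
Your proposal is correct and follows essentially the same approach as the paper: unrolling the one-step recurrence $g_t\le B_t+A_t g_{t-1}$ back to $g_0$. If anything, your write-up is more careful than the paper's, which simply displays the chain of inequalities without explicitly noting that $A_t\ge 0$ is what legitimizes each substitution.
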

\begin{proof}
Given the expected performance gap at the $t$-th iteration in \eqref{eq:Theorem1}, we carry out recursions as follows:
\begin{align}\label{23}
\mathbb{E}[F(\mathbf{w}_{t})-F(\mathbf{w}^*)]
&\leq B_{t}+A_{t}\mathbb{E}[F(\mathbf{w}_{t-1})-F(\mathbf{w}^*)]\nonumber\\ \nonumber
&\leq B_{t}+A_{t}\bigg(B_{t-1}+A_{t-1}\mathbb{E}[F(\mathbf{w}_{t-2})-F(\mathbf{w}^*)]\bigg)
\\ \nonumber
&\leq...\nonumber\\
&\leq\sum_{i=1}^{t-1}\prod_{j=1}^i A_{t+1-j}B_{t-i}+B_{t}+\prod_{j=1}^{t}A_{j}\mathbb{E}[F(\mathbf{w}_{0})-F(\mathbf{w}^*)].
\end{align}.
\end{proof}

\textbf{Lemma \ref{Lemma1}} reveals that the FL algorithm converges asymptotically in $t$ under mild conditions, as stated in \textbf{Proposition \ref{Propositon1}}.

\begin{prop}\label{Propositon1}
Given the learning rate $\alpha=\frac{1}{L}$, the convergence of the FL algorithm is guaranteed with $\lim_{t\rightarrow \infty} \mathbf{w}_t = \mathbf{w}^*$, as long as $\rho_2$ in \eqref{eq:bound} satisfies the following condition:
\begin{align}\label{conditionconver}
0<\rho_2< \frac{\mu}{(\frac{K}{K_{min}}-1)DL},
\end{align}
where $K_{min}=\min\{K_i\}_{i=1}^U$.
\end{prop}
\begin{proof}
When $A_{t}<1, \ \forall t$, it is evident that  $\lim_{t\rightarrow\infty}\prod_{j=1}^{t+1}A_{j}=0$. From \textbf{Lemma \ref{Lemma1}}, to guarantee the convergence, a sufficient condition is to ensure $A_{max}\triangleq\max\{A_t, t=1,2...\}< 1$. Given \eqref{eq:A}, it holds that
\begin{align}\label{eq:Amax}
A_{t}&=1-\frac{\mu}{L}+\rho_2\sum_{d=1}^D\Bigg(\frac{K}{\sum_{i=1}^U K_i\beta^d_{i,t}}-1\Bigg)\nonumber\\
&\leq 1-\frac{\mu}{L}+\rho_2\sum_{d=1}^D\Bigg(\frac{K}{K_{min}}-1\Bigg),
\end{align}
where $K_{min}=\min\{K_i\}_{i=1}^U$. When all workers have the same amount of data, i.e., $K_i=\frac{K}{U}, \forall i$, then $A_{t}\leq 1-\frac{\mu}{L}+\rho_2 D(U-1)$.

To ensure $A_{max}< 1$, we have
\begin{align}\label{eq:Amaxrou}
A_{max}\leq 1-\frac{\mu}{L}+\rho_2\sum_{d=1}^D\Bigg(\frac{K}{K_{min}}-1\Bigg)< 1.
\end{align}
From \eqref{eq:Amaxrou}, it holds that $\rho_2 < \frac{\mu}{(\frac{K}{K_{min}}-1)DL}$.  On the other hand, $\rho_2 > 0$, according to \eqref{eq:bound} in \textbf{Assumption 3}. As a result, we have $0<\rho_2< \frac{\mu}{(\frac{K}{K_{min}}-1)DL}$, which completes the proof.
\end{proof}

\textbf{Proposition \ref{Propositon1}} indicates that the convergence behavior of the FL algorithm depends on both the learning-related parameters, i.e., $\mu,L,\rho_1,\rho_2$, and communication-related parameters, including $\bm{\beta}$, $\mathbf{b}$ and $\sigma^2$. Interestingly, the channel noise $\sigma^2$ and $\mathbf{b}$ do not affect $A_t$, and hence they do not affect the convergence of the FL algorithm but determine the steady state that the FL algorithm converges to. 

\textbf{Lemma \ref{Lemma1}} also provides the expected convergence rate of an FL algorithm when the transmission link is error-free. In this ideal case, it offers the fastest convergence rate achievable by the FL algorithm, which is derived by the following \textbf{Lemma \ref{Lemma}}.

\begin{lemma}\label{Lemma}
Consider a resource-unconstrained and error-free mode where the effects of wireless channels, as well as that of noise, are already mitigated or fully compensated.
Given the optimal global $\mathbf{w}^*$ and the learned $\mathbf{w}_{t}$ in \eqref{eq:gt}, the upper bound of $\mathbb{E}[F(\mathbf{w}_{t})-F(\mathbf{w}^*)]$ for the FL over the air is given by
\begin{align}\label{eq:Lemma}
\mathbb{E}[F(\mathbf{w}_{t})-F(\mathbf{w}^*)]\leq
(1-\frac{\mu}{L})^{t}\mathbb{E}[F(\mathbf{w}_{0})-F(\mathbf{w}^*)].
\end{align}
\end{lemma}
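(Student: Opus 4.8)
The plan is to obtain this bound as a direct specialization of \textbf{Lemma \ref{Lemma1}} to the idealized operating regime, rather than through a fresh recursion. The key observation is that the phrase \emph{resource-unconstrained and error-free} translates into two concrete parameter settings. First, since no communication resources are scarce, every worker can schedule every entry, so $\bm{\beta}_{i,t}=\mathbf{1}$ for all $i$ and $t$. Second, since the channel and noise effects are fully compensated, the effective noise variance entering the bound is $\sigma^2=0$. Once these two identifications are made, the rest of the argument is pure substitution.

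First I would substitute $\bm{\beta}_{i,t}=\mathbf{1}$ into the expression for $A_{t}$ in \eqref{eq:A}. Because $\|\mathbf{1}-\bm{\beta}_{i,t}\|^2=0$ for every $i$, the second term vanishes and $A_{t}$ collapses to the constant $\frac{L-\mu}{L}=1-\frac{\mu}{L}$, independent of the iteration index $t$. Next I would substitute the same selection vectors together with $\sigma^2=0$ into $B_{t}$ in \eqref{eq:B}: the first term is eliminated by $\sigma^2=0$ and the second term is eliminated by $\|\mathbf{1}-\bm{\beta}_{i,t}\|^2=0$, so $B_{t}=0$ for all $t$.

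With these two simplifications in hand, I would invoke the cumulative bound \eqref{eq:Lemma1} of \textbf{Lemma \ref{Lemma1}}. The aggregate term $\Delta_{t}=\sum_{i=1}^{t-1}\prod_{j=1}^{i}A_{t+1-j}B_{t-i}+B_{t}$ is a sum in which every summand carries a factor $B_{t-i}$ or $B_{t}$, hence $\Delta_{t}=0$. The surviving product telescopes trivially, since each factor equals the same constant, giving $\prod_{j=1}^{t}A_{j}=\left(1-\frac{\mu}{L}\right)^{t}$. Combining these yields $\mathbb{E}[F(\mathbf{w}_{t})-F(\mathbf{w}^*)]\leq\left(1-\frac{\mu}{L}\right)^{t}\mathbb{E}[F(\mathbf{w}_{0})-F(\mathbf{w}^*)]$, which is exactly the claimed bound.

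There is no serious analytical obstacle here; the entire content lies in correctly identifying the idealized regime with the parameter values $\bm{\beta}_{i,t}=\mathbf{1}$ and $\sigma^2=0$, and then verifying that both the per-iteration noise-and-selection penalty $B_{t}$ and the selection-induced inflation of $A_{t}$ disappear. The only point deserving a sentence of justification is that $A_{t}$ is genuinely constant across iterations, so that the product is a clean geometric power; this follows immediately once the $\bm{\beta}$-dependent term is removed. In this sense the lemma serves as a sanity check, recovering the classical linear convergence rate $1-\frac{\mu}{L}$ of gradient descent for smooth strongly convex objectives as the best case that the over-the-air scheme can approach.
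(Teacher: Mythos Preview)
Your proposal is correct and follows essentially the same approach as the paper: identify the error-free regime with $\bm{\beta}_{i,t}=\mathbf{1}$ and $\sigma^2=0$, conclude $A_t=1-\frac{\mu}{L}$ and $B_t=0$ for all $t$, and then specialize the cumulative bound of \textbf{Lemma \ref{Lemma1}} so that $\Delta_t=0$ and the product collapses to $(1-\mu/L)^t$. The paper's proof is simply a terser version of exactly this argument.
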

\begin{proof}
  Without channel noise or worker selection (that is, all workers participate the FL and deliver their data perfectly), we have $\sigma^2=0$ and $\sum_{d=1}^D\Big(\frac{K}{\sum_{i=1}^U K_i\beta^d_{i,t}}-1\Big)=0$. Then, in \eqref{eq:A} and \eqref{eq:B}, we have $B_t=B=0$ and $A_{t}=A=1-\frac{\mu}{L}, \ \forall t$.  As a result, \eqref{eq:Lemma1} is reduced to \eqref{eq:Lemma}.
\end{proof}

It is worth noting that \textbf{Lemma \ref{Lemma}} provides the convergence rate for the ideal case, which assumes that the impacts of wireless communications, including noise, channel and constrained resources, are all mitigated to result in error-free transmission. According to \eqref{eq:Lemma1} in the realistic case, the trajectory of $\mathbb{E}[F(\mathbf{w}_{t+1})]$ exhibits jump discontinuity with a gap term $\Delta_t$ at each step $t$, as defined in \eqref{eq:Lemma1}:
\begin{align*}
\Delta_t=\sum_{i=1}^{t-1}\prod_{j=1}^i A_{t+1-j}B_{t-i}+B_{t}.
\end{align*}
This gap reflects the impact of wireless communication factors on FL, by means of the worker selection, transmit power scaling and AWGN. Intuitively, this gap diminishes as the number of selected workers increases, which reduces the value of $A_{t}$. Meanwhile, as the power scaling factor $\bm{b}_{t}$ increases, $B_{t}$ is decreased, which leads to a reduction of the gap as well.  Hence, it is necessary to optimize transmit power scaling factors and worker selection in order to minimize the gap in \eqref{eq:Lemma1} for the implementation of FL algorithms over a realistic wireless network.

\subsection{Non-convex Case}



When the loss function $F(w)$ is nonconvex, such as in the case of convolutional neural networks,
we derive the convergence rate of the FL over the air with analog aggregation for the nonconvex case without \textbf{Assumption 2}, which is summarized in \textbf{Theorem \ref{Theoremnonconvex}}.


\begin{theorem}\label{Theoremnonconvex}
Under the \textbf{Assumptions 1} and \textbf{3} for the non-convex case, given the transmit power scaling factors $\bm{b}_{t}$, worker selection vectors $\bm{\beta}_{i,t}$, the optimal global FL model $\mathbf{w}^*$, and the learning rate $\alpha = \frac{1}{L}$, the convergence at the $T$-th iteration is given by
\begin{align}\label{eq:Egt1nonctheorem}
\frac{1}{T}\sum_{t=1}^T\|\nabla F(\mathbf{w}_{t-1})\|^2\leq&\frac{2L }{T(1-\rho_2D(\frac{K}{K_{min}}-1))}\mathbb{E}[F(\mathbf{w}_{0})]-F(\mathbf{w}^*)]\nonumber\\&+\frac{2L\sum_{t=1}^TB_t }{T(1-\rho_2D(\frac{K}{K_{min}}-1))}
.
\end{align}
\end{theorem}
\begin{proof}
Please refer to Appendix \ref{Appendix B}.
\end{proof}

As we can see from \textbf{Theorem \ref{Theoremnonconvex}}, when $T$ is large enough, we have
\begin{align}\label{eq:Egt1nonctheorem}
\min_{0,1,...,T}\mathbb{E}[\|\nabla F(\mathbf{w}_{t-1})\|^2]  \leq \frac{1}{T}\sum_{t=1}^T\|\nabla F(\mathbf{w}_{t-1})\|^2\overset{T\rightarrow \infty}{\leq}\underbrace{\frac{2L\sum_{t=1}^TB_t }{T(1-\rho_2D(\frac{K}{K_{min}}-1))}}_{\bigtriangleup_T^{NC}},
\end{align}
which guarantees convergence of the FL algorithm to a stationary point\cite{bottou2018optimization,wang2018cooperative}. Similarly, the performance gap at the step $t$ for non-convex cases is given by
\begin{align}\label{eq:gapforNC}
\bigtriangleup_t^{NC}=\frac{2L\sum_{t=1}^TB_t }{T(1-\rho_2D(\frac{K}{K_{min}}-1))}.
\end{align}

Note that the non-convex case and the convex shares the same sufficient condition for convergence as \eqref{conditionconver} in \textbf{Proposition \ref{Propositon1}}.

\subsection{Stochastic gradient descent}
Our work can be extended to stochastic versions of gradient descent (SGD) as well. Here, we provide convergence analysis for mini-batch gradient descent with a constant mini-batch size $K_b$, while the results directly apply to the standard SGD by setting $K_b = 1$.
Theorem 3 summarizes the convergence behavior of SGD for the strongly convex case.


\begin{theorem}\label{SGDtheorem}
Under the \textbf{Assumptions 1, 2} and \textbf{3} for the convex case, and given the transmit power scaling factors $\bm{b}_{t}$, worker selection vectors $\bm{\beta}_{i,t}$, the optimal global FL model $\mathbf{w}^*$, the learning rate $\alpha = \frac{1}{L}$ and the mini-batch size $K_b$, the convergence behavior of the SGD implementation of FL over the air is given by
\begin{align}\label{eq:SGDtheorem}
\mathbb{E}[F(\mathbf{w}_{t})-F(\mathbf{w}^*)]
\leq& \underbrace{\sum_{i=1}^{t-1}\prod_{j=1}^i A^{SGD}_{t+1-j}B^{SGD}_{t-i}+B^{SGD}_{t}}_{\Delta^{SGD}_t}
+\prod_{j=1}^{t}A^{SGD}_{j}\mathbb{E}[F(\mathbf{w}_{0})-F(\mathbf{w}^*)],
\end{align}
where
\begin{align}
A^{SGD}_{t}=&1-\frac{\mu}{L}+\rho_2\Bigg(\sum_{d=1}^{D}\Bigg(\frac{(\sum_{i=1}^UK_b)^2 -2K(\sum_{i=1}^UK_b)}{K^2}+\frac{(\sum_{i=1}^UK_b)}{\sum_{i=1}^U K_b\beta^d_{i,t}}\Bigg)\nonumber\\ &+\frac{(\sum_{i=1}^{U}(K_i-K_b))^2}{K^2}\Bigg),\label{eq:ASGD}
\end{align}
\begin{align}
B^{SGD}_{t}=&\frac{\rho_1}{2L}\Bigg(\sum_{d=1}^{D}\Bigg(\frac{(\sum_{i=1}^UK_b)^2 -2K(\sum_{i=1}^UK_b)}{K^2}+\frac{(\sum_{i=1}^UK_b)}{\sum_{i=1}^U K_b\beta^d_{i,t}}\Bigg)\nonumber\\&+\frac{(\sum_{i=1}^{U}(K_i-K_b))^2}{K^2}\Bigg) +\left\|\left(\sum_{i=1}^U K_i\bm{\beta}_{i,t}\odot\bm{b}_{t}\right)^{\odot-1}\right\|^2\frac{L\sigma^2}{2}.\label{eq:BSGD}
\end{align}
\end{theorem}
\begin{proof}
Please refer to Appendix \ref{Appendix C}.
\end{proof}

From \textbf{Theorem \ref{SGDtheorem}}, the cumulative performance gap of FL after the $t$-th iteration for the SGD case is bounded by
\begin{align}\label{eq:gapforSGD}
\bigtriangleup_t^{SGD}=\sum_{i=1}^{t-1}\prod_{j=1}^i A^{SGD}_{t+1-j}B^{SGD}_{t-i}+B^{SGD}_{t}.
\end{align}

\begin{remark}
If $K_b$ is set to be $K_i$, then \textbf{Theorem \ref{SGDtheorem}} for SGD is the same as \textbf{Theorem \ref{Theorem1}} for GD. In addition, since the common mini-batch size is no larger than the minimum local data size, i.e., $K_b\leq K_{min}\leq \frac{K}{U}$, both $A_t^{SGD}$ in \eqref{eq:ASGD} and $B^{SGD}_t$ in \eqref{eq:BSGD} decrease as $K_b$ increases, which leads to a smaller $\Delta_t^{SGD}$ in (25). In other words, FL has a better convergence performance with a larger $K_b$. On the other hand, such an improvement on performance is achieved at the cost of high computation load at the local workers in each communication round, which reflects the tradeoff between training performance and computational complexity in SGD. 
\end{remark}

Similarly, we can also derive the mild convergence condition for SGD, given by the following \textbf{Proposition \ref{Proposition2}}.
\begin{prop}\label{Proposition2}
Given the learning rate $\alpha=\frac{1}{L}$, the convergence of the FL algorithm for SGD cases  is guaranteed with $\lim_{t\rightarrow \infty} \mathbf{w}_t = \mathbf{w}^*$, as long as $\rho_2$ in \eqref{eq:bound} satisfies the following condition:
\begin{align}\label{24SGD}
0<\rho_2< \frac{\mu}{(\frac{2UK_b}{K}+\frac{U^2K^2_b}{K^2}+DU-\frac{2DUK_b}{K}+\frac{DU^2K_b^2}{K^2})L}.
\end{align}
\end{prop}
\begin{proof}
Similar to the GD case, to guarantee the convergence, a sufficient condition is still to ensure $A^{SGD}_{max}\triangleq\max\{A^{SGD}_t, t=1,2...\}< 1$. It holds that,
\begin{align}\label{SGDAthold}
A^{SGD}_{t}=&1-\frac{\mu}{L}+\rho_2\Bigg(\frac{(K-UK_b)^2}{K^2} +\sum_{d=1}^{D}\Bigg(\frac{U^2K_b^2-2KUK_b}{K^2}+\frac{U}{\sum_{i=1}^U \beta^d_{i,t}}\Bigg)\Bigg)
\nonumber\\\leq& 1-\frac{\mu}{L}+\rho_2\Bigg(\frac{(K-UK_b)^2}{K^2} +\frac{DU^2K_b^2-2DKUK_b}{K^2}+DU\Bigg),
\end{align}
which comes from the fact that $\sum_{i=1}^U \beta^d_{i,t}\geq 1$.

Thus, we have
\begin{align}\label{SGDAmax}
A^{SGD}_{max}\leq 1-\frac{\mu}{L}+\rho_2\Bigg(\frac{(K-UK_b)^2+DU^2K_b^2-2DKUK_b+DUK^2}{K^2}\Bigg)< 1.
\end{align}

From \eqref{SGDAmax}, we can get $\rho_2< \frac{\mu}{(1-\frac{2UK_b}{K}+\frac{U^2K^2_b}{K^2}+DU-\frac{2DUK_b}{K}+\frac{DU^2K_b^2}{K^2})L}$. Considering $\rho_2>0$ in \textbf{Assumption 3}, we get \eqref{24SGD} as a result.
\end{proof}

\section{Performance Optimization for Federated Learning over the Air}\label{sec:Joint optimization}
In this section, we first formulate a joint optimization problem to reduce the gap for FL over the air, which turns out to be applicable for both the convex and non-convex cases, using either GD or SGD implementations. To make it applicable in practice in the presence of some unobservable parameters at the PS, we reformulate it to an approximate problem by imposing a conservative power constraint. To efficiently solve such an approximate problem, we first identify a tight solution space and then develop an optimal solution via discrete programming.

\subsection{Problem Formulation for Joint Optimization}

Since we are concerned with convergence accuracy, our optimization problem boils down to minimizing the performance gap for different cases (i.e., $\bigtriangleup_t$, $\bigtriangleup_t^{NC}$, and $\bigtriangleup_t^{SGD}$) at each iteration under the corresponding convergence conditions (i.e., \textbf{Proposition \ref{Propositon1}} and \textbf{Proposition \ref{Proposition2}}).

We recognize that solving \textbf{P1} amounts to iteratively minimizing those gap $\bigtriangleup_t$, $\bigtriangleup_t^{NC}$, and $\bigtriangleup_t^{SGD}$ under the transmit power constraint in \eqref{power_limitation}. At the $t$-th iteration, the objective functions for those three cases are given by
\begin{align}\label{eq:detGD}
\bigtriangleup_t=&B_t+A_t\bigtriangleup_{t-1},\\
\bigtriangleup^{NC}_t=&B_t,\label{eq:detNC}\\
\bigtriangleup^{SGD}_t=&B^{SGD}_t+A^{SGD}_t\bigtriangleup^{SGD}_{t-1}.\label{eq:detSGD}
\end{align}
where $\bigtriangleup_0=0$ and $\bigtriangleup^{SGD}_0=0$.
Note that when the optimization is executed at the $t$-th iteration, $\bigtriangleup_{t-1}$ and $\bigtriangleup^{SGD}_{t-1}$ can be treated as constants.

Considering the entry-wise transmission for analog aggregation, we remove irrelevant items and extract the component of the $d$-th entry from those gap in \eqref{eq:detGD}, \eqref{eq:detNC} and \eqref{eq:detSGD} as the objective to minimize, which is given by
\begin{align}\label{eq:entryGD}
R_t[d]=&\frac{L\sigma^2}{2\left(\sum_{i=1}^U\beta^d_{i,t}K_ib^d_{t}\right)^2}+\frac{K\rho_1+2KL\rho_2\bigtriangleup_{t-1}}{2L\sum_{i=1}^U K_i\beta^d_{i,t}}, \quad \forall d,\\
R_t^{NC}[d]=&\frac{L\sigma^2}{2\left(\sum_{i=1}^U\beta^d_{i,t}K_ib^d_{t}\right)^2}+\frac{K\rho_1}{2L\sum_{i=1}^U K_i\beta^d_{i,t}}, \quad \forall d,\label{eq:entryNC}\\
R_t^{SGD}[d]=&\frac{L\sigma^2}{2\left(\sum_{i=1}^U\beta^d_{i,t}K_ib^d_{t}\right)^2} +\frac{U(\rho_1+2L\rho_2\bigtriangleup_{t-1})}{2L\sum_{i=1}^U K_i\beta^d_{i,t}}, \quad \forall d.\label{eq:entrySGD}
\end{align}

Since all entries indexed by $d$ are separable with respect to the design parameters, we perform entry-wise optimization by considering $\mathbf{w}_{t}$ and $\mathbf{w}_{i,t}$  one entry at a time, where the superscript $d$ and the index of different cases are omitted hereafter. To determine the worker selection vector $\beta_{i,t}$ and the power scaling factor $b_t$ at the $t$-th iteration, the PS carries out a joint optimization problem formulated as follows:
\begin{subequations}\label{IterationOpt}
\begin{align}
\textbf{P2:}\quad \min_{\{b_{t},\beta_{i,t}\}_{i=1}^{U}} & R_{t}\\ \label{con:pmax}
\text{s.t.} & \  \bigg|\frac{\beta_{i,t}K_ib_{t}}{h_{i,t}}w_{i,t}\bigg|^2\leq P_i^{\max}, \\ \nonumber
& \ {\beta}_{i,t}\in\{0,1\}, i\in \{1,2,...,U\},
\end{align}
\end{subequations}
where $K_i$ should be $K_b$ in \eqref{con:pmax} for the SGD case. 

However, in \eqref{con:pmax}, the knowledge of $\{w_{i,t}\}_{i=1}^U$ is needed but is unavailable to the PS due to analog aggregation. 

To overcome this issue, we reformulate a practical optimization problem via an approximation that $\mathbf{w}_{t-1}\approx \frac{1}{U}\sum_{i=1}^U \mathbf{w}_{i,t}$, in light of the consensus constraint in \textbf{P1}.  According to \eqref{eq:localupdate} in the proof of \textbf{Theorem \ref{Theorem1}}, each local parameter $\mathbf{w}_{i,t}$ is updated from the broadcast $\mathbf{w}_{t-1}$ along the direction of the averaged gradient over its local data $\frac{\alpha}{K_i}\sum_{k=1}^{K_i}\nabla f(\mathbf{w}_{t-1};\mathbf{x}_{i,k},\mathbf{y}_{i,k})$. Hence, it is reasonable to make the following common assumption on bounded local gradients, considering that the local gradients can be controlled by adjusting the learning rate or through simple clipping \cite{liu2019decentralized,wang2018cooperative,stich2018sparsified,tang2019doublesqueeze}.

\textbf{Assumption 4 (bounded local gradients):} The gap between the global parameter $w_{t-1}$ and the local parameter update $w_{i,t},\forall i, t$ is bounded by
\begin{align}\label{eq:updatebound}
| w_{t-1}-w_{i,t}| \leq \eta,
\end{align}
where $\eta\geq 0$ is related to the learning rate $\alpha$ that satisfies the following condition\footnote{This implies the value range of $\eta$. In practice, $\eta$ can take $|w_{t-1}-w_{t-2}|$. In addition, for the SGD case, we have $\eta\geq \max\{\{|\alpha \mathbb{E}_{\mathcal{D}_i}[\nabla f(w,\mathbf{x}_{i,k},\mathbf{y}_{i,k})]|\}_{i=1}^U\}$}
\begin{align}\label{eq:updateboundcondition}
\eta\geq \max\left\{\left\{\Bigg|\frac{\alpha}{K_i}\sum_{k=1}^{K_i}\nabla f(w,\mathbf{x}_{i,k},\mathbf{y}_{i,k})\Bigg|\right\}_{i=1}^U\right\}.
\end{align}

Under \textbf{Assumption 4}, we reformulate the original optimization problem \textbf{(P2)} into the following problem (\textbf{P3}), by replacing $w_{i, t}$ in \eqref{con:pmax} by its approximation: 
%
\begin{subequations}
\begin{align}\label{IterationOpttighten}
\textbf{P3:}\quad \min_{\{b_{t},\beta_{i,t}\}_{i=1}^{U}} & R_{t}\\ \label{constraint:P3b} 
\text{s.t.} & \  \bigg|\frac{\beta_{i,t}K_ib_{t}}{h_{i,t}}\bigg|^2(|w_{t-1}|+\eta)^2\leq P_i^{\max}, \\\label{constraint:P3c}
& \ {\beta}_{i,t}\in\{0,1\}, i\in \{1,2,...,U\},
\end{align}
\end{subequations}
where the power constraint \eqref{constraint:P3b} is constructed based on the fact that
\begin{align}\label{Iteratio}
\bigg|\frac{\beta_{i,t}K_ib_{t}}{h_{i,t}}w_{i,t}\bigg|^2 =&\bigg|\frac{\beta_{i,t}K_ib_{t}}{h_{i,t}}\bigg|^2|w_{i,t}|^2\nonumber\\
\leq&\bigg|\frac{\beta_{i,t}K_ib_{t}}{h_{i,t}}\bigg|^2(|w_{t-1}|+\eta)^2.
\end{align}

Since $w_{t-1}$ is always available at the PS, \textbf{P3} becomes a feasible formulation for adoption in practice. Next, we develop the optimal solution to \textbf{P3}.

\subsection{Optimal Solution to \textbf{P3} via Discrete Programming}
At first glance, a direct solution to \textbf{P3} leads to a mixed integer programming (MIP), which unfortunately incurs high complexity. To solve \textbf{P3} in an efficient manner, we develop a simple solution by identifying a tight search space without loss of optimality. The tight search space, given in the following \textbf{Theorem \ref{Theorem_SolutionSpace}}, is a result of the constraints in \eqref{constraint:P3b} and \eqref{constraint:P3c}, irrespective of the objective function \eqref{IterationOpttighten}. Hence, it holds universally for any $R_t$, such as those in \eqref{eq:entryGD}-\eqref{eq:entrySGD}.

\begin{theorem}\label{Theorem_SolutionSpace}
When all the required parameters in \textbf{P3} i.e., $\{P_{i}^{\max}, w_{t-1}, h_{i,t}, K_i, \eta\}_{i=1}^U$, are available at the PS, the solution space of $(b_t, \beta_{i,t})$ in \textbf{P3} can be reduced to the following tight search space without loss of optimality as
  \begin{align}\label{space}
\mathcal{S}=\left\{\bigg\{\left(b_t^{(k)},\beta_{i,t}^{(k)}\right)\bigg\}_{k=1}^U \Bigg|b_t^{(k)}=\left|\frac{\sqrt{P_k^{\max}}h_{k,t}}{K_k(|w_{t-1}|+\eta)}\right|, \bm{\beta}_t^{(k)}(b_t^{(k)})=\left[\beta^{(k)}_{1,t},\dots, \beta^{(k)}_{U,t}\right],k = 1,\dots, U\right\},
\end{align}
where $\bm{\beta}_t^{(k)}$ is a function of $b_t^{(k)}$, in the form:
 \begin{align}\label{eq:betak0}
\beta^{(k)}_{i,t}=H\bigg(P_i^{\max}-\bigg|\frac{K_ib^{(k)}_{t}(|w_{t-1}|+\eta)}{h_{i,t}}\bigg|\bigg)
\end{align}
and $H(x)$ is the Heaviside step function, i.e., $H(x)=1$ for $x>0$, and $H(x)=0$ otherwise.
\end{theorem}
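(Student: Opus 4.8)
The plan is to prove that the continuous mixed-integer problem \textbf{P3} admits a globally optimal solution lying in the finite set $\mathcal{S}$, by establishing two structural facts: first, that for any fixed feasible selection vector $\bm{\beta}_t$ the objective $R_t$ is monotonically decreasing in the power scaling factor $b_t$, and second, that the coupling between $b_t$ and $\bm{\beta}_t$ through the power constraint \eqref{constraint:P3b} forces the optimal $b_t$ to take one of only $U$ candidate values. Inspecting \eqref{eq:btd}, the first term of $R_t$ is proportional to $1/(\sum_i \beta_{i,t}K_i b_t)^2$ and the second term does not involve $b_t$; hence for a fixed support of $\bm{\beta}_t$, increasing $b_t$ strictly decreases $R_t$. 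So at optimality we must push $b_t$ as large as the constraints allow.

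Next I would analyze the feasibility region imposed by \eqref{constraint:P3b}. Rearranging the constraint, every selected worker (those with $\beta_{i,t}=1$) must satisfy $b_t \le \left|\frac{\sqrt{P_i^{\text{Max}}}\,h_{i,t}}{K_i(|w_{t-1}|+\eta)}\right|$, while unselected workers impose no constraint. Define the per-worker threshold $b_t^{(k)}=\left|\frac{\sqrt{P_k^{\text{Max}}}\,h_{k,t}}{K_k(|w_{t-1}|+\eta)}\right|$. For a given set of selected workers, the largest feasible $b_t$ equals the \emph{minimum} threshold over that set. The key observation is that, because increasing the support of $\bm{\beta}_t$ can only help the numerator of the first term in $R_t$ (adding more terms to $\sum_i \beta_{i,t}K_i b_t$) and can only reduce the second penalty term, the optimal selection is to include \emph{every} worker whose threshold is at least the chosen operating value of $b_t$. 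This is exactly what the Heaviside characterization \eqref{eq:betak0} encodes: fixing $b_t=b_t^{(k)}$, a worker $i$ is admitted if and only if $P_i^{\text{Max}} \ge |K_i b_t^{(k)}(|w_{t-1}|+\eta)/h_{i,t}|^2$, i.e.\ its threshold exceeds $b_t^{(k)}$.

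I would then argue that it suffices to search over $b_t \in \{b_t^{(1)},\dots,b_t^{(U)}\}$. Since the admitted set $\bm{\beta}_t(b_t)$ is piecewise constant in $b_t$ and changes only at the threshold values $b_t^{(k)}$, while $R_t$ is strictly decreasing in $b_t$ on each piece, the optimum over any interval $[b_t^{(k)}, b_t^{(k+1)})$ (after sorting thresholds) is attained at its right endpoint — the largest $b_t$ for which the admitted set stays feasible. Every such endpoint coincides with some threshold $b_t^{(k)}$, because at $b_t=b_t^{(k)}$ worker $k$ is exactly at its power limit and is the binding constraint. Hence no optimal solution can do better than one of the $U$ pairs $(b_t^{(k)}, \bm{\beta}_t^{(k)})$, which establishes that $\mathcal{S}$ contains a global optimum without loss of optimality.

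The main obstacle I anticipate is handling the boundary behavior carefully: the objective is decreasing in $b_t$, so one wants $b_t$ as large as possible, but the feasible set of selected workers \emph{shrinks} as $b_t$ grows (high-$b_t$ values exclude power-limited workers), which in turn can \emph{increase} both terms of $R_t$. This creates a genuine trade-off rather than a trivial ``set $b_t$ to its maximum'' argument, and the proof must show that evaluating $R_t$ only at the discrete thresholds $b_t^{(k)}$ — rather than at interior points — loses nothing. The crux is the monotonicity-plus-piecewise-constant structure: I would make precise that on each maximal interval where $\bm{\beta}_t$ is constant, strict monotonicity pushes the optimizer to the right endpoint, and that sweeping $k=1,\dots,U$ enumerates all such endpoints, so the finite search is exhaustive and optimal.
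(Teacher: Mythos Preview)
Your proposal is correct and follows essentially the same approach as the paper's proof: both argue that $R_t$ decreases in $b_t$ for fixed selection, that the optimal selection admits every feasible worker, and hence that the search reduces to the $U$ per-worker thresholds $b_{k,t}^{\text{Max}}$. Your treatment is in fact more careful than the paper's, which simply asserts ``we should maximize the number of selected workers and the transmit power scaling factor'' and jumps to the threshold set; you supply the missing monotonicity-plus-piecewise-constant justification that explains why interior values of $b_t$ can be discarded.
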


\begin{proof}
Please see Appendix \ref{Appendix B}.
\end{proof}

Thanks to \textbf{Theorem \ref{Theorem_SolutionSpace}}, we equivalently transform \textbf{P3} from a MIP into a discrete programming (DP) problem \textbf{P4} as follows
\begin{align}\label{P3eq}
\textbf{P4:}  \qquad  \min_{(b_t, \boldsymbol{\beta}_t) \in \mathcal{S}} R_t = R_t\left(b_t, \boldsymbol{\beta}_t\right)
\end{align}

According to \textbf{P4}, the objective $R_t$ can only take on $U$ possible values corresponding to the $U$ feasible values of $b_t$; meanwhile, given each $b_t$, the value of $\boldsymbol{\beta}_t$ is uniquely determined. Hence the minimum $R_t$ can be obtained via line search over the $U$ feasible points $(b_t, \boldsymbol{\beta}_t)$ in \eqref{space}. Note that the feasible points in \eqref{space} are determined by the channel gains, power limits and data sizes of the $U$ workers. Hence, the optimal transmission policy decided by \textbf{P4} reflects the tradeoff among workers in terms of their channel quality, available power resource, and data quality.


It is worth noting that the solution $b^*_{t}$ of \textbf{P4} may exceed the maximum value allowed at a worker, due to the approximation introduced in \textbf{Assumption 4}. To strictly comply to the power constraint, each worker needs to take the following bounding step when sending its local parameters: 

1) if $\bigg|\frac{K_ib^*_{t}w_{i,t}}{h_{i,t}}\bigg|^2 \leq P_i^{\max}$, then the $i$-th worker sends $\frac{K_ib^*_{t}w_{i,t}}{h_{i,t}}$ ;

2) otherwise, it sends $\sqrt{P_i^{\max}}\text{sgn}(w_{i,t})$, where $\mbox{sgn}(\cdot)$ is the sign function.

Putting together, we propose a jo\textbf{i}nt optimizatio\textbf{n} for \textbf{FL} \textbf{o}ver \textbf{t}he \textbf{a}ir (INFLOTA) as summarized in \textbf{Algorithm \ref{alg:policyforFL}}, which is a dynamic scheduling and power scaling policy. By using different $R_t$, our INFLOTA can be adjust to all the considered cases including the convex and non-convex, using either GD or SGD implementations.


\begin{algorithm}[!htb]
	\caption{The implementation of INFLOTA}
	\label{alg:policyforFL}
	\begin{algorithmic}[1]
\renewcommand{\algorithmicrequire}{\textbf{Given:}}
		\REQUIRE ~~\\
		System parameters $\{P_{i}^{\max}, K_i, \eta\}_{i=1}^U$;
\\
\STATE The PS initializes $\{\mathbf{w}_0, b_{1}^*, \bm{\beta}^*_{1}\}$ and broadcasts them to all the local workers.
    \FOR {$t=1:T$}
    \STATE \!\!\!\!\textbf{At the workers:}
        \STATE \textbf{Computation:} Iteratively update the local model via \eqref{eq:localupdate0} where $\mathbf{w}$ = $\mathbf{w}_{t-1}$ is from the PS; 
        \STATE \textbf{Communication:} Upon receiving $(b_{t}, \bm{\beta}_{t})$, send $\text{sgn}(w_{i,t})\min \left(\frac{K_ib_{t}|w_{i,t}|}{h_{i,t}}, \sqrt{P_i^{\texttt{Max}}}\right)$ to the PS, if $\beta_{i,t}=1, \forall i, d$;
    \STATE \!\!\!\!\textbf{At the PS:}
    \STATE Calculate the global model $\mathbf{w}_{t}$ via \eqref{eq:gt} from $\mathbf{y}_t$ that is aggregated from local workers; 
        \FOR {$d=1:D$}
        \STATE Calculate $\mathcal{S}$ from \eqref{space}, which yields $U$ feasible points $\{(b_{t+1}^{(k)}, \bm{\beta}_{t+1}^{(k})\}_{k=1}^U$;
        \STATE Solve \textbf{P4} in \eqref{P3eq} by using a line search over the $U$ feasible points to find the optimal $\{b_{t+1}^*, \bm{\beta}^*_{t+1}\}$ for given $d$ and $t$;
        \ENDFOR
        \STATE Send $w_{t}$ and $(b_{t+1}, \bm{\beta}_{t+1})$ (including all $D$ optimal $\{b_{t+1}^*, \bm{\beta}^*_{t+1}\}$) to all the workers;
    \ENDFOR
	\end{algorithmic}
\end{algorithm}

\begin{remark}
(\textbf{Optimality})
\textbf{P3} is equivalently reformulated as \textbf{P4}, which is solved by a search method with much reduced computational complexity thanks to the reduced search space.
Comparing \textbf{P3} to \textbf{P2}, the constraint \eqref{constraint:P3b} of \textbf{P3} is more restrictive than the constraint in \eqref{con:pmax} of \textbf{P2}. Since \textbf{P3} reduces the feasible domain of \textbf{P2}, the solution to \textbf{P3} cannot be superior to that of \textbf{P2}. Therefore, the optimal solution of \textbf{P3} is an upper bound of \textbf{P2}, i.e., $R_t$ calculated by solving \textbf{P3} is larger than the actual one.
\end{remark}

\begin{remark}
 (\textbf{Complexity}) \textbf{Algorithm \ref{alg:policyforFL}} provides a holistic solution for implementation of the overall FL at both the PS and workers sides. Its computational complexity is mainly determined by that of the optimization step in \textbf{P4}.
The complexity order of the optimization step is low at $\mathcal{O}(U)$, since the search space is reduced to $U$ points only via \textbf{P4}. 
\end{remark}


\begin{remark}\label{remark:implementation}
(\textbf{Implementation}) To implement the FL over the air in \textbf{Algorithm \ref{alg:policyforFL}}, the PS must know the CSI, the number of the data samples and the maximum transmit power of all local workers. This information can be obtained by the PS when local workers initially connect to it. Before the implementation of \textbf{Algorithm \ref{alg:policyforFL}}, the PS must first broadcast the global model information to the workers.
Noticeably, taking \textbf{P4} into the implementation of FL, some workers need to send $\text{sgn}(w_{i,t})\min \left(\frac{K_ib_{t}|w_{i,t}|}{h_{i,t}},  \sqrt{P_i^{\texttt{Max}}}\right)$ to meet the requirement on the maximum transmit power. Such a bounding method can be viewed as a quantization measure, which does not affect the convergence \cite{alistarh2017qsgd}.
\end{remark}

\section{Simulation Results And Analysis} \label{Sec:Numerical Results}

In the simulations, we evaluate the performance of the proposed INFLOTA for both linear regression and image classification tasks, which are based on a synthetic dataset and the MNIST dataset, respectively.

The considered network has $U=20$ workers, whose maximum power is set to be $P^{\max}_i=P^{\max}= 10$ mW for any $i \in [1, U]$. The receiver noise power at PS is set to be $\sigma^2=10^{-4}$ mW, i.e., $SNR=\frac{P^{\max}}{\sigma^2}=5$ dB. The wireless channel gain between the workers and the PS are generated from a Rayleigh fading model. Here, $h_{i,t}$ is generated from an exponential distribution with unit mean for different $i$ and $t$.

We use two baseline methods for comparison: a) an FL algorithm that assumes idealized wireless transmissions with error-free links to achieve perfect aggregation, and b) an FL algorithm that randomly determines the power scalar and user selection. They are named as \emph{Perfect aggregation} and \emph{Random policy}, respectively.  In \emph{Random policy}, the probability of each worker being selected is 50\% and the power scalar is generated from an exponential distribution with unit mean.

\subsection{Linear regression experiments}
In linear regression experiments, the synthetic data used to train the FL algorithm is generated randomly from $[0,1]$. The input $x$ and the output $y$ follow the function $y= -2x + 1 + n\times 0.4$ where $n$ follows a Gaussian distribution $\mathcal{N}(0,1)$. The FL algorithm is used to model the relationship between $x$ and $y$.

\begin{figure}[tb]
  \centering
  \includegraphics[scale=0.45]{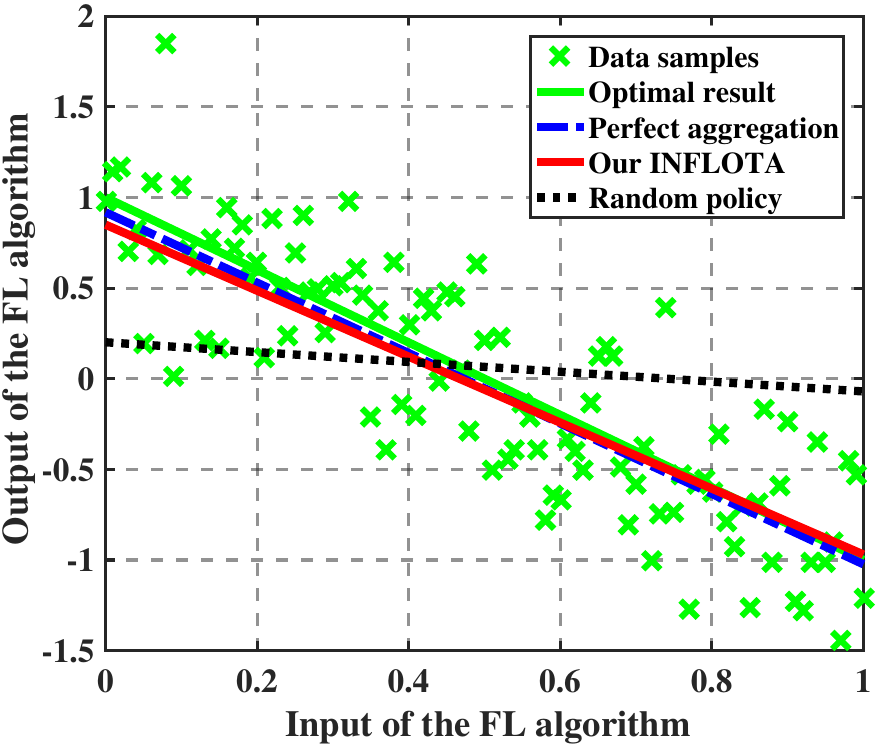}
  \caption{An example of implementing FL for linear regression.}\label{linearregression}
\end{figure}
Since linear regression only involves two parameters, we train a simple two-layer neural network, with one neuron in each layer, without activation functions, which is the convex case. The loss function is the MSE of the model prediction $\hat{y}$ and the labeled true output $y$. The learning rate is set to 0.01. 

Fig. \ref{linearregression} shows an example of using FL for linear regression. The optimal result of a linear regression is $y=-2x+1$, because the original data generation function is $y=-2x+1+0.4n$.
In Fig. \ref{linearregression}, we can see that the most accurate approximation is achieved by \emph{Perfect aggregation}, which is the ideal case without considering the influence of wireless communication.  \emph{Random policy} considers the influence of wireless communication but without any optimization. Thus, its performance is worst. Our proposed INFLOTA performs closely to the ideal case, which jointly considers the learning and the influence of wireless communication. This is because that our proposed INFLOTA can optimize worker selection and power control so as to reduce the effect of wireless transmission errors on FL.
\begin{figure}[tb]
  \centering
  \includegraphics[scale=0.45]{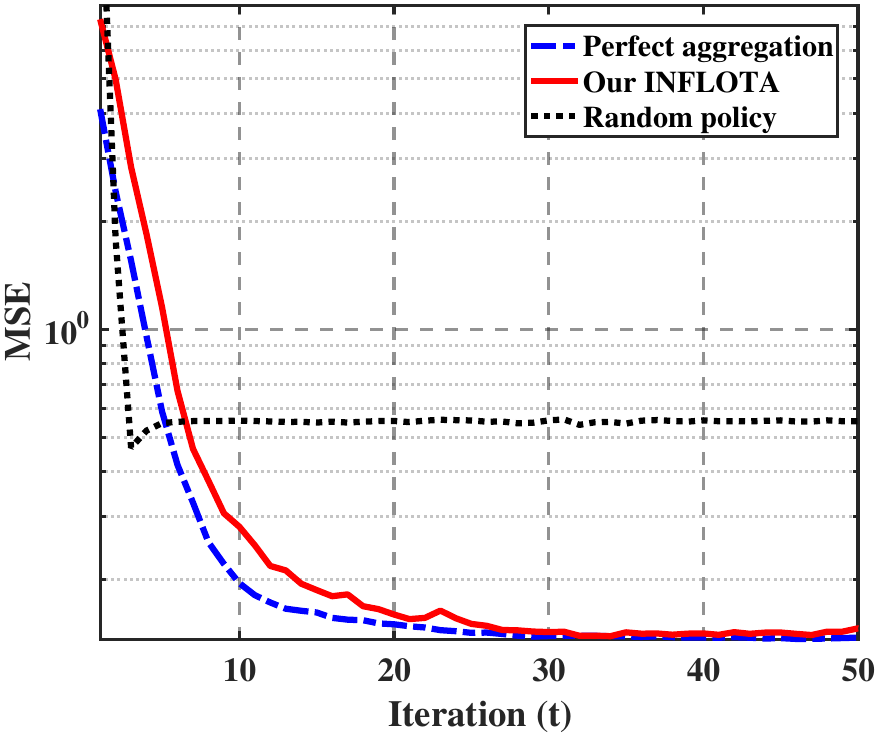}
  \caption{MSE as the number of iteration varies.}\label{iteration}
\end{figure}

In Fig. \ref{iteration}, we show how wireless transmission affects the convergence behavior of the global FL model training in terms the value of the loss function and the global FL model remains unchanged which shows that the global FL model converges. As we can see, as the number of iterations increases, the MSE values of all the considered learning algorithms decrease at different rates, and eventually flatten out to reach their steady state. All schemes converge, but to different steady state values. This behavior corroborates  the results in \textbf{Lemma \ref{Lemma1}} and \textbf{Proposition \ref{Propositon1}} that the channel noise does not affect the convergence of the FL algorithm but it affects the value that the FL algorithm converges to.
\begin{figure}[tb]
  \centering
  \includegraphics[scale=0.45]{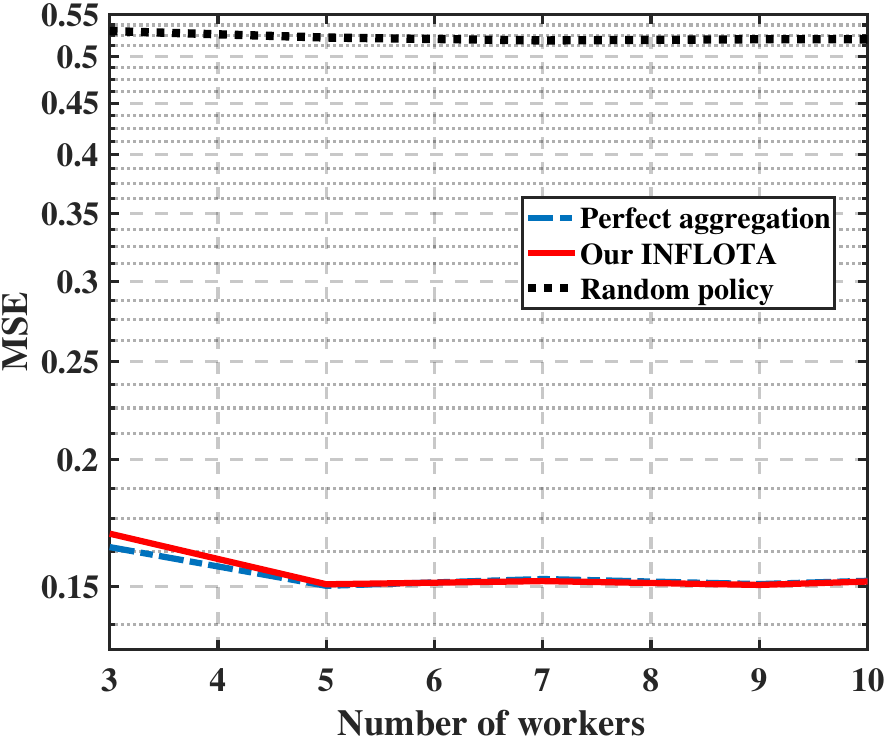}
  \caption{MSE as the number of workers varies.}\label{nubofworker}
\end{figure}

Fig. \ref{nubofworker} shows how MSE varies with the total number of workers $U$. In general, the MSE performance of all considered FL schemes decreases as $U$ increases. This is due to the fact that an increase in the number of workers leads to an increased volume of data available for the FL training and hence improved accuracy of the estimated model parameters. Moreover, as the number of workers increases, the effect of wireless transmission on the global FL model accuracy starts to diminish. This is because the data samples may be already enough for accurate training when $U$ exceeds a certain level.  

\begin{figure}[tb]
  \centering
  \includegraphics[scale=0.45]{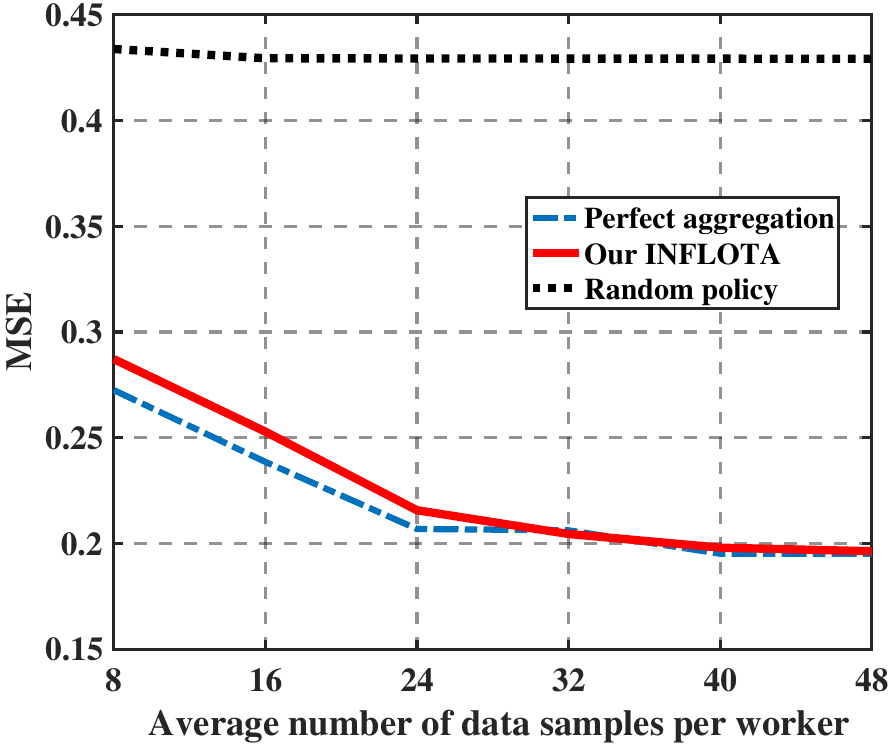}
  \caption{MSE as the number of data samples per worker varies.}\label{samples}
\end{figure}

In Fig. \ref{samples}, we present how the MSE changes with the average number of samples per worker $\bar{K} = K/U$. The number of data samples per worker fluctuates around the average number, i.e., we set $K_i=\text{round}(\text{uniform}[\bar{K}-5, \bar{K}+5])$. As $\bar{K}$ increases, all of the considered learning algorithms have more data samples available for training, and hence the MSE of all of considered FL algorithms decrease in Fig. \ref{samples}. As the average data samples per worker continues to increase, the MSE improvement slows down and eventually saturates.
This is because that as the data samples per worker continues to increase, the data samples  are enough for training the FL model. 

\begin{figure}[tb]
  \centering
  \includegraphics[scale=0.45]{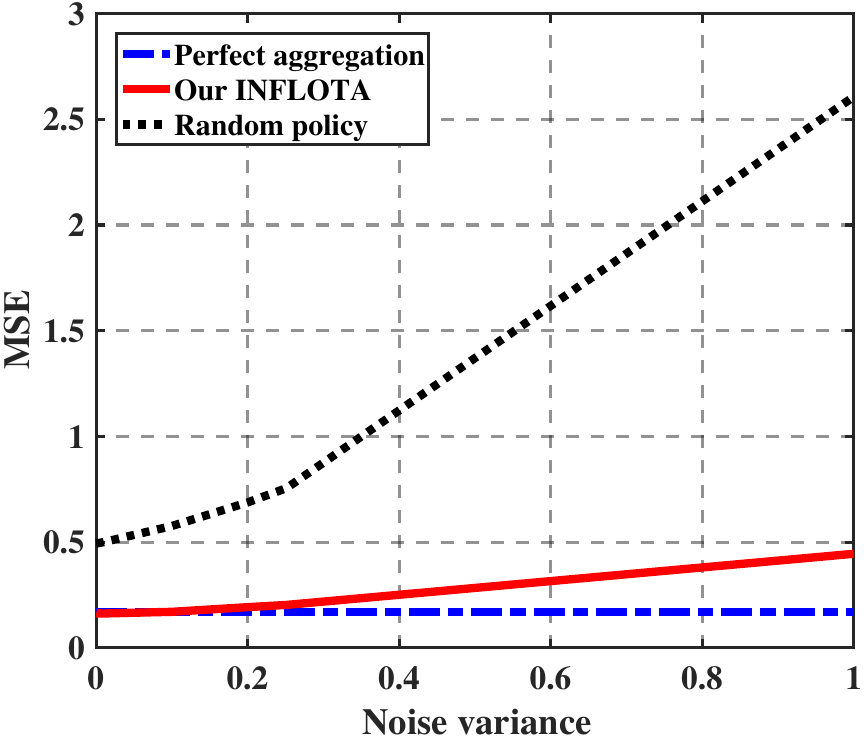}
    \caption{MSE as the noise variance varies.}\label{noise}
\end{figure}

Fig. \ref{noise} presents how the AWGN received by the PS affects the MSE. We can see that as the noise variance increases, the MSE values of all of considered FL algorithms increase, except for \emph{Perfect aggregation}. When the noise variance is small (e.g., less than $10^{-1}$), it has little effect on the performance of FL algorithms. 

\subsection{Evaluation on the MNIST dataset}

In order to evaluate the performance of our proposed INFLOTA in realistic application scenarios with real data, we train a multilayer perceptron (MLP) on the MNIST dataset\footnote{http://yann.lecun.com/exdb/mnist/} with a 784-neuron input layer, a 64-neuron hidden layer, and a 10-neuron softmax output layer, which is a non-convex case. We adopt cross entropy as the loss function, and rectified linear unit (ReLU) as the activation function. The total number of parameters in the MLP is 50890. The learning rate $\alpha$ is set as 0.1. In MNIST dataset, there are 60000 training samples and 10000 test samples. We randomly take out $500-1000$ training samples and distribute them to 20 local workers as their local data. Then the three trained FL are tested with 10000 test samples. We provide the results of cross entropy and test accuracy versus the iteration index $t$ in Fig. \ref{crossentropy} and Fig. \ref{TESTaccuracy}, respectively. Since the MNIST dataset is designed for handwritten digit identification, the test accuracy presents the identification accuracy. As we can see, our proposed INFLOTA outperforms \emph{Random policy}, and achieves comparable performance as \emph{Perfect aggregation}.

\begin{figure}[tb]
  \centering
  \includegraphics[scale=0.45]{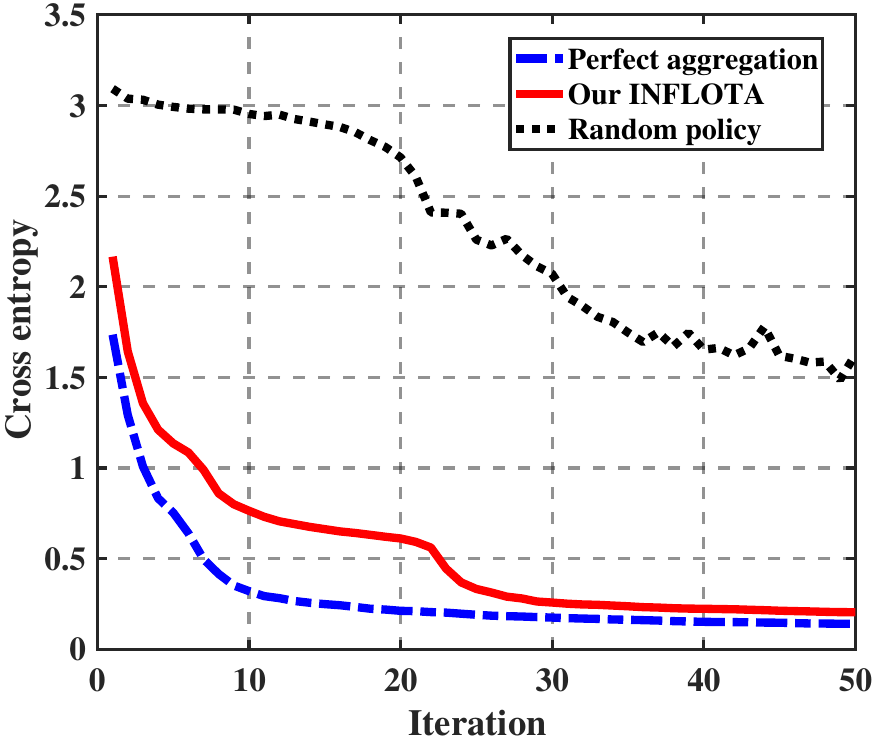}
    \caption{Cross entropy as the number of iteration varies.}\label{crossentropy}
\end{figure}

\begin{figure}[tb]
  \centering
  \includegraphics[scale=0.45]{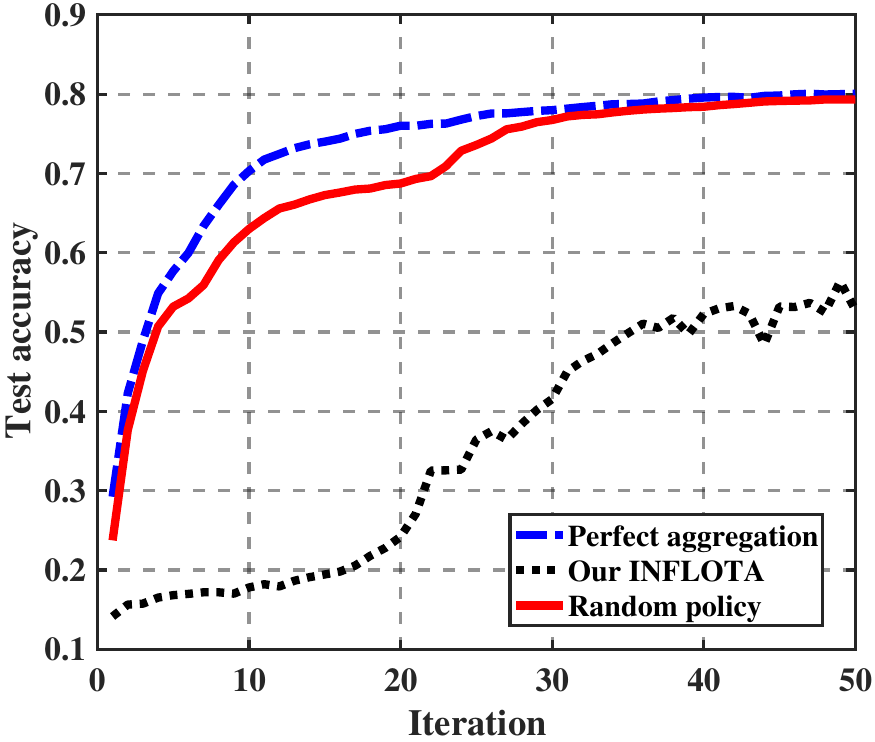}
    \caption{Test accuracy as the number of iteration varies.}\label{TESTaccuracy}
\end{figure}

\section{Conclusion}\label{Sec:Conclusion}
In this paper, we have studied the joint optimization of communications and
FL over the air with analog aggregation, in which both worker selection and transmit power control are considered under the constraints of limited communication resources. Under the convex and non-convex cases with either the GD or SGD implementations, we respectively derive closed-form expressions for the expected convergence rate of the FL algorithm, which can quantify the impact of resource-constrained wireless communications on FL under the analog aggregation paradigm. Through analyzing the expected convergence rate, we have proposed a joint optimization scheme of worker selection and power control, which can mitigate the impact of wireless communications on the convergence and performance of the FL algorithm. More significantly, our joint optimization scheme is applicable for both the convex and non-convex cases, using either GD or SGD implementations. Simulation results show that the proposed optimization scheme is effective in mitigating the impact of wireless communications on FL.

\section*{Acknowledgments}
We are very grateful to all reviewers who have helped improve the quality of this paper. This work was partly supported by the National Natural Science Foundation of China (Grant Nos. 61871023 and 61931001), Beijing Natural Science Foundation (Grant No. 4202054), and the National Science Foundation of the US (Grant Nos. 1741338 and 1939553).


\begin{appendices}
\section{Proof of \textbf{Theorem \ref{Theorem1}}}\label{Appendix A}
\textbf{Theorem \ref{Theorem1}} considers the full GD method for convex problems. Following the proof for the gradient methods with noise in \cite{friedlander2012hybrid}, we first present the inequality implied by the \textbf{Assumption 1}, as follows
\begin{align}\label{Taylor}
F(\mathbf{w}_{t})
\leq F(\mathbf{w}_{t-1})+(\mathbf{w}_{t}-\mathbf{w}_{t-1})^T\nabla F(\mathbf{w}_{t-1})+\frac{L}{2}\|\mathbf{w}_{t}-\mathbf{w}_{t-1}\|^2.
\end{align}

Employing a standard full GD method, the $i$-th worker updates its local FL model parameter $\mathbf{w}_{i,t}$ at the $t$-th iteration by
\begin{equation}\label{eq:localupdate}
  \mathbf{w}_{i,t}=\mathbf{w}_{t-1}-\frac{\alpha}{K_i}\sum_{k=1}^{K_i}\nabla f(\mathbf{w}_{t-1},\mathbf{x}_{i,k},\mathbf{y}_{i,k}), \quad i=1,2,...,U.
\end{equation}

Substituting \eqref{eq:localupdate} to \eqref{eq:gt}, we have
\begin{align}\label{13}
\mathbf{w}_{t}=& \mathbf{w}_{t-1}+\left(\sum_{i=1}^U K_i\bm{\beta}_{i,t}\odot\bm{b}_{t}\right)^{\odot-1}\odot\mathbf{z}_{t}\nonumber\\
  &-\alpha\left(\sum_{i=1}^U K_i\bm{\beta}_{i,t}\right)^{\odot-1}\odot \sum_{i=1}^U\sum_{k=1}^{K_i}\bm{\beta}_{i,t}\odot\nabla f(\mathbf{w}_{t-1};\mathbf{x}_{i,k},y_{i,k})\nonumber\\
=&\mathbf{w}_{t-1}-\alpha(\nabla F(\mathbf{w}_{t-1})-\mathbf{o}),
\end{align}
where
\begin{align}\label{14}
\mathbf{o}=&\nabla F(\mathbf{w}_{t-1})+\left(\alpha\sum_{i=1}^U K_i\bm{\beta}_{i,t}\odot\bm{b}_{t}\right)^{\odot-1}\odot\mathbf{z}_{t}
\nonumber\\&-\left(\sum_{i=1}^U K_i\bm{\beta}_{i,t}\right)^{\odot-1}\odot \sum_{i=1}^U\sum_{k=1}^{K_i}\bm{\beta}_{i,t}\odot\nabla f(\mathbf{w}_{t-1};\mathbf{x}_{i,k},y_{i,k}).
\end{align}

Given the learning rate $\alpha=\frac{1}{L}$ (a special setting for simple expression without loss of generality), the expected optimization function of $\mathbb{E}[F(\mathbf{w}_{t})]$ can be expressed as
\begin{align}\label{eq:Egt}
\mathbb{E}[F(\mathbf{w}_{t})]\leq & \mathbb{E}\bigg[F(\mathbf{w}_{t-1})-\alpha(\nabla F(\mathbf{w}_{t-1})-\mathbf{o})^T\nabla F(\mathbf{w}_{t-1})
+\frac{L\alpha^2}{2}\|\nabla F(\mathbf{w}_{t-1})-\mathbf{o}\|^2\bigg]\nonumber\\
\overset{(a)}{=}&\mathbb{E}[F(\mathbf{w}_{t-1})]-\frac{1}{2L}\|\nabla F(\mathbf{w}_{t-1})\|^2+\frac{1}{2L}\mathbb{E}[\| \mathbf{o}\|^2],
\end{align}
where the step (a) is derived from the fact that
\begin{align}
\frac{L\alpha^2}{2}\|\nabla F(\mathbf{w}_{t-1})-\mathbf{o}\|^2&=\frac{1}{2L}\|\nabla F(\mathbf{w}_{t-1})\|^2-\frac{1}{L}\mathbf{o}^T\nabla F(\mathbf{w}_{t-1})+\frac{1}{2L}\| \mathbf{o}\|^2.
\end{align}

$\mathbb{E}[\| \mathbf{o}\|^2]$ can be derived as follows
\begin{align}\label{eq:Eo}
\mathbb{E}[\| \mathbf{o}\|^2]=&\mathbb{E}\Bigg[\bigg\|\nabla F(\mathbf{w}_{t-1})+\left(\alpha\sum_{i=1}^U K_i\bm{\beta}_{i,t}\odot\bm{b}_{t}\right)^{\odot-1}\odot\mathbf{z}_{t}
\nonumber\\
&-\left(\sum_{i=1}^U K_i\bm{\beta}_{i,t}\right)^{\odot-1}\odot \sum_{i=1}^U\sum_{k=1}^{K_i}\bm{\beta}_{i,t}\odot\nabla f(\mathbf{w}_{t-1};\mathbf{x}_{i,k},\mathbf{y}_{i,k}) \bigg\|^2\Bigg]\nonumber\\ \nonumber
=&\mathbb{E}\Bigg[\bigg\|\frac{\sum_{i=1}^U\sum_{k=1}^{K_i}\nabla f(\mathbf{w}_{t-1};\mathbf{x}_{i,k},\mathbf{y}_{i,k})}{K}
+\left(\alpha\sum_{i=1}^U K_i\bm{\beta}_{i,t}\odot\bm{b}_{t}\right)^{\odot-1}\odot\mathbf{z}_{t}\\ \nonumber
&-\left(\sum_{i=1}^U K_i\bm{\beta}_{i,t}\right)^{\odot-1}\odot \sum_{i=1}^U\sum_{k=1}^{K_i}\bm{\beta}_{i,t}\odot\nabla f(\mathbf{w}_{t-1};\mathbf{x}_{i,k},\mathbf{y}_{i,k})  \bigg\|^2\Bigg] \\
=&\mathbb{E}\Bigg[\bigg\|\sum_{i=1}^U\left(\frac{\mathbf{1}}{K}-\bm{\beta}_{i,t}\odot\left(\sum_{i=1}^U K_i\bm{\beta}_{i,t}\right)^{\odot-1}\right)\odot\sum_{k=1}^{K_i}\nabla f(\mathbf{w}_{t-1};\mathbf{x}_{i,k},\mathbf{y}_{i,k})\nonumber\\
&
+\left(\alpha\sum_{i=1}^U K_i\bm{\beta}_{i,t}\odot\bm{b}_{t}\right)^{\odot-1}\odot\mathbf{z}_{t} \bigg\|^2\Bigg],
\end{align}
where $\mathbf{1}$ is the all-$1$ vector of length $D$. The dimension of $\mathbf{1}$ is the same length as that of $\bm{\beta}_{i,t}$.

Employing the triangle inequality of norms $\| \mathbf{X}+\mathbf{Y}\| \leq \| \mathbf{X}\|+\| \mathbf{Y}\|$, the submultiplicative property of norms $\| \mathbf{X}\mathbf{Y}\| \leq \| \mathbf{X}\|\| \mathbf{Y}\|$, and the Jensen’s inequality, \eqref{eq:Eo} can be further derived as follows
\begin{align}\label{eq:Eo1}
\mathbb{E}[\| \mathbf{o}\|^2]
\leq &\mathbb{E}\Bigg[\Bigg\|\sum_{i=1}^U\left(\frac{\mathbf{1}}{K}-\bm{\beta}_{i,t}\odot\left(\sum_{i=1}^U K_i\bm{\beta}_{i,t}\right)^{\odot-1}\right)\odot\sum_{k=1}^{K_i}\nabla f(\mathbf{w}_{t-1};\mathbf{x}_{i,k},\mathbf{y}_{i,k})\Bigg\|^2\Bigg]
\nonumber\\ \nonumber &+\mathbb{E}\Bigg[\Bigg \|\left(\alpha\sum_{i=1}^U K_i\bm{\beta}_{i,t}\odot\bm{b}_{t}\right)^{\odot-1}\odot\mathbf{z}_{t}\Bigg\|^2\Bigg] \nonumber\\ \nonumber
\leq &\mathbb{E}\Bigg[K \sum_{i=1}^U\Bigg\|\frac{\mathbf{1}}{K}-\bm{\beta}_{i,t}\odot\left(\sum_{i=1}^U K_i\bm{\beta}_{i,t}\right)^{\odot-1}\Bigg\|^2\sum_{k=1}^{K_i}\|\nabla f(\mathbf{w}_{t-1};\mathbf{x}_{i,k},\mathbf{y}_{i,k})\|^2\Bigg] \nonumber\\ \nonumber &+\mathbb{E}\Bigg[\left\|\left(\alpha\sum_{i=1}^U K_i\bm{\beta}_{i,t}\odot\bm{b}_{t}\right)^{\odot-1}\odot\mathbf{z}_{t}\right\|^2\Bigg]\\
\leq &K \sum_{i=1}^U\Bigg\|\frac{\mathbf{1}}{K}-\bm{\beta}_{i,t}\odot\left(\sum_{i=1}^U K_i\bm{\beta}_{i,t}\right)^{\odot-1}\Bigg\|^2\sum_{k=1}^{K_i}\|\nabla f(\mathbf{w}_{t-1};\mathbf{x}_{i,k},\mathbf{y}_{i,k})\|^2
\nonumber\\ &+\left\|\left(\sum_{i=1}^U K_i\bm{\beta}_{i,t}\odot\bm{b}_{t}\right)^{\odot-1}\right\|^2\sigma^2L^2.
\end{align}

Applying \eqref{eq:bound} in \textbf{Assumption 3} to \eqref{eq:Eo1} leads to

\begin{align}\label{eq:Eo2}
\mathbb{E}[\| \mathbf{o}\|^2]
\leq&K\sum_{i=1}^U\Bigg\|\frac{\mathbf{1}}{K}-\bm{\beta}_{i,t}\odot\left(\sum_{i=1}^U K_i\bm{\beta}_{i,t}\right)^{\odot-1}\Bigg\|^2K_i(\rho_1+\rho_2\|\nabla F(\mathbf{w}_{t-1})\|^2)\nonumber\\
&+\left\|\left(\sum_{i=1}^U K_i\bm{\beta}_{i,t}\odot\bm{b}_{t}\right)^{\odot-1}\right\|^2\sigma^2L^2\nonumber\\
=&K\sum_{i=1}^U\sum_{d=1}^D\Bigg(\frac{1}{K}-\frac{\beta^d_{i,t}}{\sum_{i=1}^U K_i\beta^d_{i,t}}\Bigg)^2K_i(\rho_1+\rho_2\|\nabla F(\mathbf{w}_{t-1})\|^2)\nonumber\\
&+\left\|(\sum_{i=1}^U K_i\bm{\beta}_{i,t}\odot\bm{b}_{t})^{\odot-1}\right\|^2\sigma^2L^2
\nonumber\\
=&K\sum_{i=1}^U\sum_{d=1}^D\Bigg(\frac{1}{K^2}-\frac{2}{K}\frac{\beta^d_{i,t}}{\sum_{i=1}^U K_i\beta^d_{i,t}}+\frac{(\beta^d_{i,t})^2}{(\sum_{i=1}^U K_i\beta^d_{i,t})^2}\Bigg)K_i(\rho_1+\rho_2\|\nabla F(\mathbf{w}_{t-1})\|^2)\nonumber\\
&+\left\|(\sum_{i=1}^U K_i\bm{\beta}_{i,t}\odot\bm{b}_{t})^{\odot-1}\right\|^2\sigma^2L^2
\nonumber\\
=&\sum_{d=1}^D\Bigg(\frac{K}{\sum_{i=1}^U K_i\beta^d_{i,t}}-1\Bigg)(\rho_1+\rho_2\|\nabla F(\mathbf{w}_{t-1})\|^2)+\left\|(\sum_{i=1}^U K_i\bm{\beta}_{i,t}\odot\bm{b}_{t})^{\odot-1}\right\|^2\sigma^2L^2.
\end{align}

Substituting \eqref{eq:Eo2} to \eqref{eq:Egt}, we have:
\begin{align}\label{eq:Egt1}
\mathbb{E}[F(\mathbf{w}_{t})]
\leq&\frac{1}{2L}\Bigg(\sum_{d=1}^D\Bigg(\frac{K}{\sum_{i=1}^U K_i\beta^d_{i,t}}-1\Bigg)(\rho_1+\rho_2\|\nabla F(\mathbf{w}_{t-1})\|^2)\nonumber\\
&+\left\|(\sum_{i=1}^U K_i\bm{\beta}_{i,t}\odot\bm{b}_{t})^{\odot-1}\right\|^2\sigma^2L^2\Bigg) +\mathbb{E}[F(\mathbf{w}_{t-1})]-\frac{1}{2L}\|\nabla F(\mathbf{w}_{t-1})\|^2.
\end{align}

Subtract $\mathbb{E}[F(\mathbf{w}^*)]$ from both sides of \eqref{eq:Egt1}, we have:
\begin{align}\label{eq:EgtSubtract}
&\mathbb{E}[F(\mathbf{w}_{t})-F(\mathbf{w}^*)]
\leq\mathbb{E}[F(\mathbf{w}_{t-1})-F(\mathbf{w}^*)]-\frac{1}{2L}\|\nabla F(\mathbf{w}_{t-1})\|^2\nonumber\\
&+\frac{1}{2L}\Bigg(\sum_{d=1}^D\Bigg(\frac{K}{\sum_{i=1}^U K_i\beta^d_{i,t}}-1\Bigg)(\rho_1+\rho_2\|\nabla F(\mathbf{w}_{t-1})\|^2)+\left\|(\sum_{i=1}^U K_i\bm{\beta}_{i,t}\odot\bm{b}_{t})^{\odot-1}\right\|^2\sigma^2L^2\Bigg).
\end{align}

To minimize both sides of \eqref{eq:stronglyconvex}, we have
\begin{align}\label{19}
\min_{\mathbf{w}_{t}}F(\mathbf{w}_{t})\geq \min_{\mathbf{w}_{t}}(& F(\mathbf{w}_{t-1})+(\mathbf{w}_{t}-\mathbf{w}_{t-1})^T\nabla F(\mathbf{w}_{t-1})+\frac{\mu}{2}\|\mathbf{w}_{t}-\mathbf{w}_{t-1}\|^2).
\end{align}

The minimization of the left-hand side is achieved by $\mathbf{w}_{t}=\mathbf{w}^*$, while the minimization of the right-hand side is achieved by $\mathbf{w}_{t}=\mathbf{w}_{t-1}-\frac{1}{\mu}\nabla F(\mathbf{w}_{t-1})$.
Thus, we have
\begin{align}\label{20}
F(\mathbf{w}^*)\geq F(\mathbf{w}_{t-1})-\frac{1}{2\mu}\|\nabla F(\mathbf{w}_{t-1})\|^2 .
\end{align}
Then
\begin{align}\label{eq:minimizationlr}
\|\nabla F(\mathbf{w}_{t-1})\|^2\geq 2\mu(F(\mathbf{w}_{t-1})-F(\mathbf{w}^*)).
\end{align}

Substituting \eqref{eq:minimizationlr} to \eqref{eq:EgtSubtract}, we get
\begin{align}\label{eq:EgtStronglycon}
&\mathbb{E}[F(\mathbf{w}_{t})-F(\mathbf{w}^*)]
\leq(1-\frac{\mu}{L})\mathbb{E}[F(\mathbf{w}_{t-1})-F(\mathbf{w}^*)]\nonumber\\
&+\frac{1}{2L}\Bigg(\sum_{d=1}^D\Bigg(\frac{K}{\sum_{i=1}^U K_i\beta^d_{i,t}}-1\Bigg)(\rho_1+\rho_2\|\nabla F(\mathbf{w}_{t-1})\|^2)+\left\|(\sum_{i=1}^U K_i\bm{\beta}_{i,t}\odot\bm{b}_{t})^{\odot-1}\right\|^2\sigma^2L^2\Bigg)
\nonumber\\
&=(1-\frac{\mu}{L})\mathbb{E}[F(\mathbf{w}_{t-1})-F(\mathbf{w}^*)]+\frac{\rho_2}{2L}\Bigg(\sum_{d=1}^D\Bigg(\frac{K}{\sum_{i=1}^U K_i\beta^d_{i,t}}-1\Bigg)\Bigg)\|\nabla F(\mathbf{w}_{t-1})\|^2\nonumber\\
&+\frac{\rho_1}{2L}\sum_{d=1}^D\Bigg(\frac{K}{\sum_{i=1}^U K_i\beta^d_{i,t}}-1\Bigg)+\left\|(\sum_{i=1}^U K_i\bm{\beta}_{i,t}\odot\bm{b}_{t})^{\odot-1}\right\|^2\frac{L\sigma^2}{2}.
\end{align}

Next, in the same way that \eqref{eq:minimizationlr} is derived, to minimize both sides of \eqref{Taylor}, we have
\begin{align}\label{eq:minimizationSmooth}
\|\nabla F(\mathbf{w}_{t-1})\|^2\leq 2L(F(\mathbf{w}_{t-1})-F(\mathbf{w}^*)).
\end{align}

Substituting \eqref{eq:minimizationSmooth} to \eqref{eq:EgtStronglycon}, we get
\begin{align}\label{eq:Egtsmoothly}
\mathbb{E}[F(\mathbf{w}_{t})-F(\mathbf{w}^*)]
\leq&\Bigg(1-\frac{\mu}{L}+\rho_2\sum_{d=1}^D\Bigg(\frac{K}{\sum_{i=1}^U K_i\beta^d_{i,t}}-1\Bigg)\Bigg)\mathbb{E}[F(\mathbf{w}_{t-1})-F(\mathbf{w}^*)] \nonumber\\
&+\frac{\rho_1}{2L}\sum_{d=1}^D\Bigg(\frac{K}{\sum_{i=1}^U K_i\beta^d_{i,t}}-1\Bigg)+\left\|(\sum_{i=1}^U K_i\bm{\beta}_{i,t}\odot\bm{b}_{t})^{\odot-1}\right\|^2\frac{L\sigma^2}{2}\nonumber\\
=&B_{t}+A_{t}\mathbb{E}[F(\mathbf{w}_{t-1})-F(\mathbf{w}^*)],
\end{align}
where
\begin{align}\label{eq:At}
A_{t}=1-\frac{\mu}{L}+\rho_2\sum_{d=1}^D\Bigg(\frac{K}{\sum_{i=1}^U K_i\beta^d_{i,t}}-1\Bigg),
\end{align}
\begin{align}\label{eq:Bt}
B_{t}&=\frac{\rho_1}{2L}\sum_{d=1}^D\Bigg(\frac{K}{\sum_{i=1}^U K_i\beta^d_{i,t}}-1\Bigg)+\left\|(\sum_{i=1}^U K_i\bm{\beta}_{i,t}\odot\bm{b}_{t})^{\odot-1}\right\|^2\frac{L\sigma^2}{2}.
\end{align}

The proof is completed.

\section{Proof of \textbf{Theorem \ref{Theoremnonconvex}}}\label{Appendix B}
\textbf{Theorem \ref{Theoremnonconvex}} considers the full GD method for non-convex problems. The proof of \textbf{Theorem \ref{Theoremnonconvex}} follows that of \textbf{Theorem \ref{Theorem1}} until \eqref{eq:Egt1}. From \eqref{eq:Egt1}, we have
\begin{align}\label{eq:Egt1nonc}
\mathbb{E}[F(\mathbf{w}_{t})]
\leq&\mathbb{E}[F(\mathbf{w}_{t-1})]-\frac{1}{2L}\Bigg(1-\rho_2\sum_{d=1}^D\Bigg(\frac{K}{\sum_{i=1}^U K_i\beta^d_{i,t}}-1\Bigg)\Bigg)\|\nabla F(\mathbf{w}_{t-1})\|^2\nonumber\\
&+\frac{\rho_1}{2L}\sum_{d=1}^D\Bigg(\frac{K}{\sum_{i=1}^U K_i\beta^d_{i,t}}-1\Bigg)+\left\|(\sum_{i=1}^U K_i\bm{\beta}_{i,t}\odot\bm{b}_{t})^{\odot-1}\right\|^2\frac{L\sigma^2}{2}
\nonumber\\
&=\mathbb{E}[F(\mathbf{w}_{t-1})]-\frac{2-A_t-\frac{\mu}{L}}{2L}\|\nabla F(\mathbf{w}_{t-1})\|^2+B_t.
\end{align}

Summing up the above inequality from $t = 1$ to $t = T$, we get
\begin{align}\label{eq:Egt1nonc1}
\mathbb{E}[F(\mathbf{w}_{t})]-\mathbb{E}[F(\mathbf{w}_{0})]
\leq-\sum_{t=1}^T\frac{2-A_t-\frac{\mu}{L}}{2L}\|\nabla F(\mathbf{w}_{t-1})\|^2+\sum_{t=1}^TB_t,
\end{align}
which leads to
\begin{align}\label{eq:Egt1nonc2}
\sum_{t=1}^T\frac{2-A_t-\frac{\mu}{L}}{2L}\|\nabla F(\mathbf{w}_{t-1})\|^2 &\leq\mathbb{E}[F(\mathbf{w}_{0})]-\mathbb{E}[F(\mathbf{w}_{t})]+\sum_{t=1}^TB_t
\nonumber\\
&\leq\mathbb{E}[F(\mathbf{w}_{0})]-\mathbb{E}[F(\mathbf{w}^*)]+\sum_{t=1}^TB_t
.
\end{align}

Recalling \textbf{Proposition \ref{Propositon1}}, we have
\begin{align}\label{eq:Egt1noncAteq}
0\leq \frac{1-\rho_2D(\frac{K}{K_{min}}-1)}{2L}\leq \frac{2-A_t-\frac{\mu}{L}}{2L}\leq \frac{1}{2L}, \quad\forall t.
\end{align}

Substituting \eqref{eq:Egt1noncAteq} to \eqref{eq:Egt1nonc2}, we get
\begin{align}\label{eq:Egt1nonc3}
\frac{1}{T}\sum_{t=1}^T\frac{1-\rho_2D(\frac{K}{K_{min}}-1)}{2L}\|\nabla F(\mathbf{w}_{t-1})\|^2&\leq\frac{1}{T}\sum_{t=1}^T\frac{2-A_t-\frac{\mu}{L}}{2L}\|\nabla F(\mathbf{w}_{t-1})\|^2
\nonumber\\
&\leq\frac{1}{T}(\mathbb{E}[F(\mathbf{w}_{0})]-\mathbb{E}[F(\mathbf{w}^*)])+\frac{1}{T}\sum_{t=1}^TB_t
.
\end{align}

As a result, we have the conclusion in \textbf{Theorem \ref{Theoremnonconvex}},
\begin{align}\label{eq:Egt1noncresult}
\frac{1}{T}\sum_{t=1}^T\|\nabla F(\mathbf{w}_{t-1})\|^2\leq&\frac{2L }{T(1-\rho_2D(\frac{K}{K_{min}}-1))}\mathbb{E}[F(\mathbf{w}_{0})]-\mathbb{E}[F(\mathbf{w}^*)]\nonumber\\&+\frac{2L\sum_{t=1}^TB_t }{T(1-\rho_2D(\frac{K}{K_{min}}-1))}
.
\end{align}
\section{Proof of \textbf{Theorem \ref{SGDtheorem}}}\label{Appendix C}
Exploiting the SGD method, the local parameter of the $i$-th worker is updated at the $t$-th iteration by
\begin{align}\label{eq:localupdateSGD}
  \mathbf{w}_{i,t}=\mathbf{w}_{t-1}-\alpha \mathbb{E}_{\mathcal{D}_i}\left[\frac{\sum_{k=1}^{K_b}\nabla f(\mathbf{w}_{t-1},\mathbf{x}_{i,k},\mathbf{y}_{i,k})}{K_b}\right], \quad i=1,2,...,U,
\end{align}
where $\mathbb{E}_{\mathcal{D}_i}[\cdot]$ is the expectation, which represents that the $i$-th worker randomly chooses $K_b$ samples from its local dataset $\mathcal{D}_i$ to compute the local gradient.

Substituting \eqref{eq:localupdateSGD} to \eqref{eq:gt}, we reach a averaged gradient estimate as
\begin{align}\label{eq:SGDestimation}
\mathbf{w}_{t}=& \mathbf{w}_{t-1}+\left(\sum_{i=1}^U K_b\bm{\beta}_{i,t}\odot\bm{b}_{t}\right)^{\odot-1}\odot\mathbf{z}_{t}
  \nonumber\\&-\alpha\left(\sum_{i=1}^U K_b\bm{\beta}_{i,t}\right)^{\odot-1}\odot \sum_{i=1}^U\left(K_b\bm{\beta}_{i,t}\odot\mathbb{E}_{\mathcal{D}_i}\left[\frac{\sum_{k=1}^{K_b}\nabla f(\mathbf{w}_{t-1},\mathbf{x}_{i,k},\mathbf{y}_{i,k})}{K_b}\right]\right)\nonumber\\
=&\mathbf{w}_{t-1}-\alpha(\nabla F(\mathbf{w}_{t-1})-\mathbf{o}),
\end{align}
where
\begin{align}\label{14}
\mathbf{o}=&\nabla F(\mathbf{w}_{t-1})+\left(\alpha\sum_{i=1}^U K_b\bm{\beta}_{i,t}\odot\bm{b}_{t}\right)^{\odot-1}\odot\mathbf{z}_{t}
\nonumber\\&-\left(\sum_{i=1}^U K_b\bm{\beta}_{i,t}\right)^{\odot-1}\odot \sum_{i=1}^U\left(\bm{\beta}_{i,t}\odot\mathbb{E}_{\mathcal{D}_i}\left[\sum_{k=1}^{K_b}\nabla f(\mathbf{w}_{t-1},\mathbf{x}_{i,k},\mathbf{y}_{i,k})\right]\right).
\end{align}

Let $\mathcal{N}_{i,t}$ denote the set of the samples that are not chosen by the $i$-th worker at the $t$-th iteration, $\mathbb{E}[\|\mathbf{o}\|^2]$ can be derived as follows
\begin{align}\label{eq:EoSGD}
\mathbb{E}[\| \mathbf{o}\|^2]=&\mathbb{E}\Bigg[\bigg\|\frac{\sum_{i=1}^U\sum_{k=1}^{K_i}\nabla f(\mathbf{w}_{t-1};\mathbf{x}_{i,k},\mathbf{y}_{i,k})}{K}
+\left(\alpha\sum_{i=1}^U K_b\bm{\beta}_{i,t}\odot\bm{b}_{t}\right)^{\odot-1}\odot\mathbf{z}_{t}\nonumber\\ \nonumber
&-\left(\sum_{i=1}^U K_b\bm{\beta}_{i,t}\right)^{\odot-1}\odot \sum_{i=1}^U\left(\bm{\beta}_{i,t}\odot\mathbb{E}_{\mathcal{D}_i}\left[\sum_{k=1}^{K_b}\nabla f(\mathbf{w}_{t-1},\mathbf{x}_{i,k},\mathbf{y}_{i,k})\right]\right)  \bigg\|^2\Bigg] \\
=&\mathbb{E}\Bigg[\bigg\|\sum_{i=1}^U\left(\frac{\mathbf{1}}{K}-\bm{\beta}_{i,t}\odot\left(\sum_{i=1}^U K_b\bm{\beta}_{i,t}\right)^{\odot-1}\right)\odot \mathbb{E}_{\overline{\mathcal{N}}_{i,t}}\left[\sum_{k\in \overline{\mathcal{N}}_{i,t}}\nabla f(\mathbf{w}_{t-1};\mathbf{x}_{i,k},\mathbf{y}_{i,k})\right]\nonumber\\
&+\frac{\sum_{i=1}^U\mathbb{E}[\sum_{k\in \mathcal{N}_{i,t}}\nabla f(\mathbf{w}_{t-1};\mathbf{x}_{i,k},\mathbf{y}_{i,k})]}{K}
+\left(\alpha\sum_{i=1}^U K_b\bm{\beta}_{i,t}\odot\bm{b}_{t}\right)^{\odot-1}\odot\mathbf{z}_{t} \bigg\|^2\Bigg],\nonumber\\
\leq &\left(\sum_{i=1}^UK_b\right)\sum_{i=1}^U\bigg\|\frac{\mathbf{1}}{K}-\bm{\beta}_{i,t}\odot\left(\sum_{i=1}^U K_b\bm{\beta}_{i,t}\right)^{\odot-1}\bigg\|^2 \mathbb{E}_{\overline{\mathcal{N}}_{i,t}}\left[\sum_{k\in \overline{\mathcal{N}}_{i,t}}\|\nabla f(\mathbf{w}_{t-1};\mathbf{x}_{i,k},\mathbf{y}_{i,k})\|^2\right]\nonumber\\
&+\frac{\|\sum_{i=1}^U\mathbb{E}[\sum_{k\in \mathcal{N}_{i,t}}\nabla f(\mathbf{w}_{t-1};\mathbf{x}_{i,k},\mathbf{y}_{i,k})]\|^2}{K^2}
+\left\|\left(\sum_{i=1}^U K_b\bm{\beta}_{i,t}\odot\bm{b}_{t}\right)^{\odot-1}\right\|^2\sigma^2L^2.
\end{align}

Applying \textbf{Assumption 3}, we get
\begin{align}\label{eq:EoSGDass4}
\mathbb{E}[\| \mathbf{o}\|^2]
\leq &\left(\sum_{i=1}^UK_b\right)\sum_{d=1}^{D}\left(\frac{\left(\sum_{i=1}^UK_b\right)-2K}{K^2}+\frac{1}{\sum_{i=1}^U K_b\beta^d_{i,t}}\right)(\rho_1+\rho_2\|\nabla F(\mathbf{w}_{t-1})\|^2)+\nonumber\\
&\frac{(\sum_{i=1}^{U}(K_i-K_b))^2}{K^2}  (\rho_1+\rho_2\|\nabla F(\mathbf{w}_{t-1})\|^2)
+\left\|\left(\sum_{i=1}^U K_b\bm{\beta}_{i,t}\odot\bm{b}_{t}\right)^{\odot-1}\right\|^2\sigma^2L^2.
\end{align}

Substituting \eqref{eq:EoSGDass4} into \eqref{eq:Egt}, we have
\begin{align}\label{eq:Egt1SGD}
\mathbb{E}[F(\mathbf{w}_{t})]
\leq&\frac{1}{2L}\Bigg(\left\|\left(\sum_{i=1}^U K_b\bm{\beta}_{i,t}\odot\bm{b}_{t}\right)^{\odot-1}\right\|^2\sigma^2L^2\nonumber\\
&+\Bigg(\sum_{d=1}^{D}\Bigg(\frac{\left(\sum_{i=1}^UK_b\right)^2-2K\left(\sum_{i=1}^UK_b\right)}{K^2}+\frac{\left(\sum_{i=1}^UK_b\right)}{\sum_{i=1}^U K_b\beta^d_{i,t}}\Bigg)\nonumber\\&+\frac{(\sum_{i=1}^{U}(K_i-K_b))^2}{K^2}\Bigg)\Bigg)(\rho_1+\rho_2\|\nabla F(\mathbf{w}_{t-1})\|^2)\Bigg) \nonumber\\
& +\mathbb{E}[F(\mathbf{w}_{t-1})]-\frac{1}{2L}\|\nabla F(\mathbf{w}_{t-1})\|^2.
\end{align}

Subtracting $\mathbb{E}[F(\mathbf{w}^*)]$ from both sides of \eqref{eq:Egt1SGD}, and applying \eqref{eq:minimizationlr} and \eqref{eq:minimizationSmooth}, we get
\begin{align}\label{eq:EgtsmoothlySGD}
\mathbb{E}[F(\mathbf{w}_{t})-F(\mathbf{w}^*)]
\leq B^{SGD}_{t}+A^{SGD}_{t}\mathbb{E}[F(\mathbf{w}_{t-1})-F(\mathbf{w}^*)],
\end{align}
where
\begin{align}
A^{SGD}_{t}=&1-\frac{\mu}{L}+\rho_2\Bigg(\sum_{d=1}^{D}\Bigg(\frac{(\sum_{i=1}^UK_b)^2 -2K(\sum_{i=1}^UK_b)}{K^2}+\frac{(\sum_{i=1}^UK_b)}{\sum_{i=1}^U K_b\beta^d_{i,t}}\Bigg)\nonumber\\ &+\frac{(\sum_{i=1}^{U}(K_i-K_b))^2}{K^2}\Bigg),
\end{align}
\begin{align}
B^{SGD}_{t}=&\frac{\rho_1}{2L}\Bigg(\sum_{d=1}^{D}\Bigg(\frac{(\sum_{i=1}^UK_b)^2 -2K(\sum_{i=1}^UK_b)}{K^2}+\frac{(\sum_{i=1}^UK_b)}{\sum_{i=1}^U K_b\beta^d_{i,t}}\Bigg)\nonumber\\&+\frac{(\sum_{i=1}^{U}(K_i-K_b))^2}{K^2}\Bigg)+\left\|\left(\sum_{i=1}^U K_i\bm{\beta}_{i,t}\odot\bm{b}_{t}\right)^{\odot-1}\right\|^2\frac{L\sigma^2}{2}.
\end{align}

Applying \eqref{eq:EgtsmoothlySGD} recursively, we have
\begin{align}\label{eq:SGDConver}
\mathbb{E}[F(\mathbf{w}_{t})-F(\mathbf{w}^*)]
\leq& \sum_{i=1}^{t-1}\prod_{j=1}^i A^{SGD}_{t+1-j}B^{SGD}_{t-i}+B^{SGD}_{t}
+\prod_{j=1}^{t}A^{SGD}_{j}\mathbb{E}[F(\mathbf{w}_{0})-F(\mathbf{w}^*)].
\end{align}
which completes the proof.
\section{Proof of \textbf{Theorem \ref{Theorem_SolutionSpace}}}\label{Appendix D}
To minimize $R_{t}$, it can be seen from \eqref{eq:entryGD}, \eqref{eq:entryNC} and \eqref{eq:entrySGD} that we should maximize the number of the selected workers and the transmit power scaling factor in the $t$-th iteration. Thus, the selected workers should send their parameters at their maximum power. In order to reach the desired parameter aggregation at the PS as in \eqref{eq:g}, each worker needs to use the same transmit power scaling factor $b_{t}$, which is a parameter that needs to be optimized ($b_t$ determines the worker selection). According to \eqref{eq:entryGD}, \eqref{eq:entryNC} and \eqref{eq:entrySGD}, a larger $b_t$ leads to a smaller $R_t$. On the other hand, \eqref{constraint:P3b} indicates that a larger $b_t$ results in less workers is selected, which then results in an increase of $R_t$.


Rewriting \eqref{con:pmax} and replacing $|w_{i,t}|$ with $(|w_{t-1}|+\eta)$, we obtain the maximum acceptable $b_{t}$ of the $i$-th worker as
\begin{align}\label{maxacc}
b_{i,t}^{\max}=\left|\frac{\sqrt{P_i^{\max}}h_{i,t}}{K_i(|w_{t-1}|+\eta)}\right|.
\end{align}

Accordingly, $b_{t}$ should be chosen from $\{b_{i,t}^{\max}\}_{i=1}^U$. Once $b_{t}$ is determined, $\bm{\beta}_{t}$ can be determined by verifying whether the transmit power meets the condition in \eqref{power_limitation}. As a result, we obtain a reduced solution space of the optimization problem \textbf{P2} as
\begin{align}\label{spaceProof}
\mathcal{S}=&\Bigg \{\{(b_t^{(k)},\beta_{i,t}^{(k)})\}_{k=1}^U\bigg|b_t^{(k)}=b_{k,t}^{\max} ,\bm{\beta}_t^{(k)}(b_t^{(k)})=[\beta^{(k)}_{1,t},\dots, \beta^{(k)}_{U,t} ], k = 1,\dots, U\Bigg\},
\end{align}
with
\begin{align}\label{eq:betak}
\beta^{(k)}_{U,t}=H\bigg(P_U^{\max}-\bigg|\frac{K_Ub^{(k)}_{t}(|w_{t-1}|+\eta)}{h_{U,t}}\bigg|\bigg)
\end{align}
where
\begin{equation}
\label{eqH}
H(x)=\left\{
\begin{aligned}
1 & , & x>0, \\
0 & , & x\leq0.
\end{aligned}
\right.
\end{equation}
is the Heaviside step function.


\end{appendices}

\bibliographystyle{IEEEtran}
\bibliography{ref}

\end{document}